\documentclass{article}

\usepackage{macro}
\usepackage{microtype}
\usepackage{graphicx}
\usepackage{subfigure}
\usepackage{booktabs}
\usepackage{multirow}
\usepackage[colorlinks=true,allcolors=indigo]{hyperref}

\usepackage[nohyperref,accepted]{icml2025}

\usepackage{amsmath}
\usepackage{amssymb}
\usepackage{mathtools}
\usepackage{amsthm}
\usepackage{xurl}
\usepackage{code}
\usepackage[capitalize,noabbrev,nameinlink]{cleveref}
\usepackage{footmisc}
\usepackage{tabularx}

\theoremstyle{plain}
\newtheorem{theorem}{Theorem}[section]
\newtheorem{proposition}[theorem]{Proposition}

\theoremstyle{definition}

\theoremstyle{remark}

\newcommand{\primary}[1]{{\color{anthracite}#1}}
\newcommand{\rasae}{{\primary{\textbf{RA-SAE}}}}
\newcommand{\asae}{{\primary{\textbf{A-SAE}}}}
\newcommand{\asaes}{{\primary{\textbf{A-SAE}}}s}

\newcommand{\best}[1]{{\color{green}\bf#1}}
\newcommand{\worst}[1]{{\color{red}\bf#1}}

\title{Archetypal SAE: Adaptive and Stable Dictionary Learning for Concept Extraction in Large Vision Models}

\newcommand{\customfootnote}[2]{%
\renewcommand{\thefootnote}{}
    \begingroup
    \renewcommand\thefootnote{#1}%
    \footnotemark
    \endgroup
    \footnotetext{#2}%
\renewcommand{\thefootnote}{\arabic{footnote}}
}
\icmltitlerunning{Archetypal SAE}

\begin{document}

\hypersetup{
  pdftitle={Archetypal SAE: Adaptive and Stable Dictionary Learning for Concept Extraction in Large Vision Models},
  pdfauthor={Thomas Fel, Ekdeep Singh Lubana, Jacob S. Prince, Matthew Kowal, Victor Boutin, Isabel Papadimitriou, Binxu Wang, Martin Wattenberg, Demba Ba, Talia Konkle}
}

\newcommand{\aut}[1]{\rule[-0.5mm]{0mm}{1mm} \textbf{#1}}
\newcommand{\af}[1]{\small{#1} \rule[-0.5mm]{0mm}{1mm}}

\newcommand{\customTitleBlock}{%
  \icmltitle{Archetypal SAE: Adaptive and Stable Dictionary Learning for Concept Extraction in Large Vision Models}%
  \begin{center}
    \begin{tabular}{ccc}
      \shortstack{\aut{Thomas Fel}$^\star$ \\ \af{Kempner Institute} \\ \af{Harvard University}} &
      \shortstack{\aut{Ekdeep Singh Lubana}$^\star$ \\ \af{CBS-NTT Program in Physics of Intelligence} \\ \af{Harvard University}} &
      \shortstack{\aut{Jacob S. Prince} \\ \af{Dept. of Psychology} \\ \af{Harvard University}}
      \vspace{3mm}
    \end{tabular}
    \begin{tabular}{cccc}
      \shortstack{\aut{Matthew Kowal} \\ \af{FAR AI} \\ \af{York University}} &
      \shortstack{\aut{Victor Boutin} \\ \af{CerCo} \\ \af{CNRS}} &
      \shortstack{\aut{Isabel Papadimitriou} \\ \af{Kempner Institute} \\ \af{Harvard University}} &
      \shortstack{\aut{Binxu Wang} \\ \af{Kempner Institute} \\ \af{Harvard University}}
      \vspace{3mm}
    \end{tabular}
    \begin{tabular}{ccc}
      \shortstack{\aut{Martin Wattenberg}$^\dagger$ \\ \af{Harvard University} \\ \af{Google DeepMind}} &
      \shortstack{\aut{Demba Ba} \\ \af{Kempner Institute} \\ \af{Harvard University}} &
      \shortstack{\aut{Talia Konkle} \\ \af{Dept. of Psychology \& Kempner Institute} \\ \af{Harvard University}}
    \end{tabular}
  \end{center}%
}

\twocolumn[\begin{minipage}{\textwidth}\customTitleBlock\end{minipage}]

\icmlcorrespondingauthor{Thomas Fel}{tfel@g.harvard.edu}
\icmlkeywords{Explainability, Interpretability, Dictionary Learning, Computer Vision, Archetypal Analysis}
\customfootnote{}{$^\star$ These authors contributed equally to this work.}
\customfootnote{}{$^\dagger$ work done at Harvard.}
\customfootnote{}{\textit{Proceedings of the
$\mathit{42}^{nd}$ International Conference on Machine Learning},
Vancouver, Canada. PMLR 267, 2025.
Copyright 2025 by the author(s).}

\begin{abstract}
\vspace{2mm}
Sparse Autoencoders (SAEs) have emerged as a powerful framework for machine learning interpretability, enabling the unsupervised decomposition of model representations into a dictionary of abstract, human-interpretable concepts.
However, we reveal a fundamental limitation: existing SAEs exhibit severe instability, as identical models trained on similar datasets can produce sharply different dictionaries, undermining their reliability as an interpretability tool.
To address this issue, we draw inspiration from the Archetypal Analysis framework introduced by \citet{cutler1994archetypal} and present Archetypal SAEs (\asae), wherein dictionary atoms are constrained to the convex hull of data.
This geometric anchoring significantly enhances the stability of inferred dictionaries, and their mildly relaxed variants {\rasae}s further match state-of-the-art reconstruction abilities.
To rigorously assess dictionary quality learned by SAEs, we introduce two new benchmarks that test (\textbf{\textit{i}}) plausibility, if dictionaries recover ``true'' classification directions and (\textbf{\textit{ii}}) identifiability, if dictionaries disentangle synthetic concept mixtures. %
Across all evaluations, {\rasae}s consistently yield more structured representations while uncovering novel, semantically meaningful concepts in large-scale vision models.

\end{abstract}

\begin{figure}
    \vspace{-16.0mm}
    \centering
    \makebox[\textwidth]{ %
        \hspace{-0.53\textwidth} %
        \includegraphics[width=0.55\textwidth]{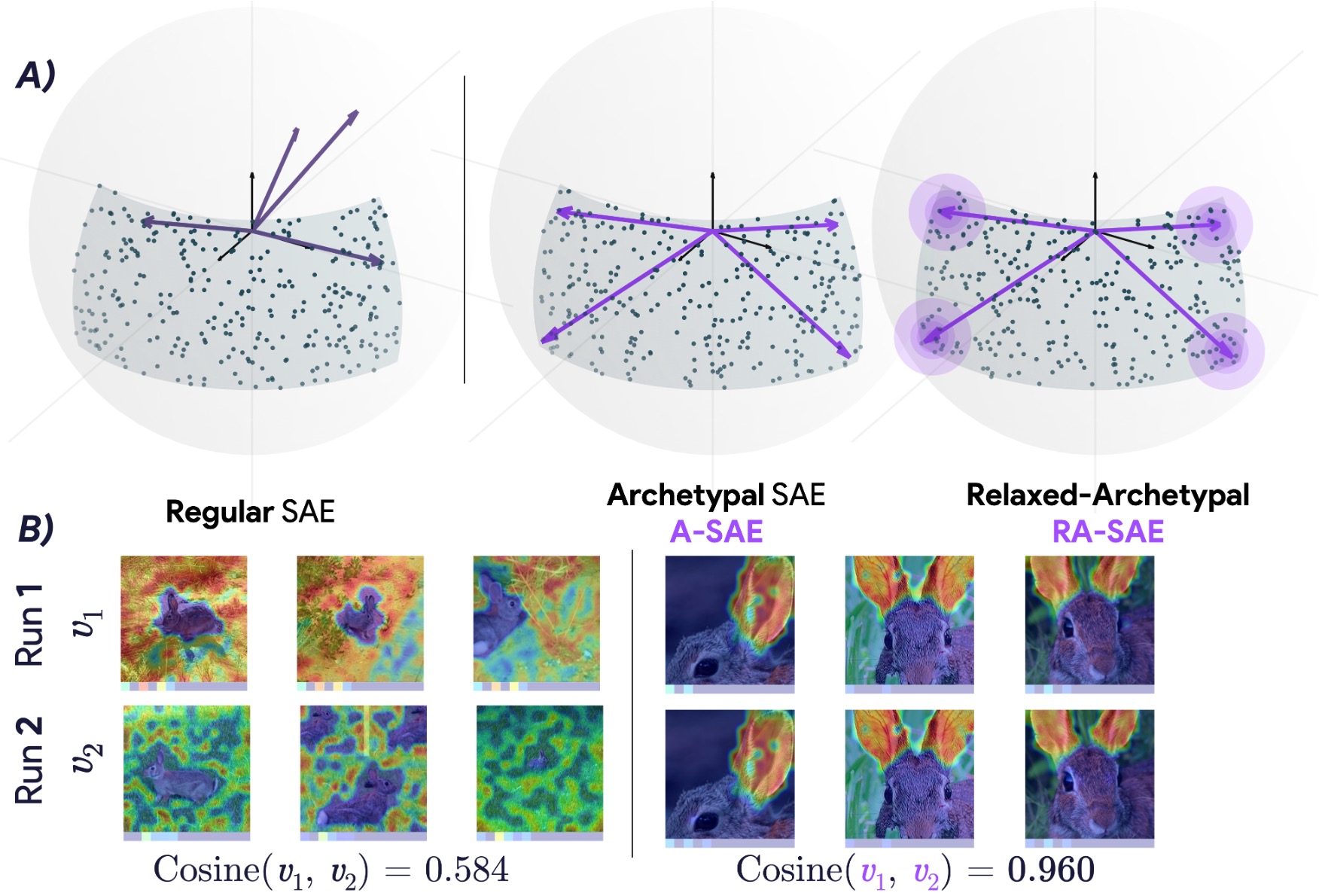}
    }
    \vspace{-6mm}
    \caption{\textbf{A) Archetypal-SAE.}
    Archetypal-SAEs constrain dictionary atoms (decoder directions) to the data’s convex hull, improving stability. A relaxed variant (\rasae) allows mild relaxation, matching standard SAEs in reconstruction while maintaining stability. Both integrate with any SAE variant (e.g., TopK, JumpReLU).
    \textbf{B) Instability Problem.}
    Standard SAEs produce inconsistent dictionaries across runs, undermining interpretability. For example, in classical SAEs, the second most important concept for ``rabbit'' in one run has no counterpart in another run ($\cos = 0.58$). In contrast, Archetypal-SAEs maintain consistent concept correspondences across runs, ensuring stability.
    }
    \label{fig:toy_rasae}
    \vspace{-0mm}
\end{figure}

\clearpage

\section{Introduction}

Artificial Neural Networks (ANNs) have revolutionized computer vision, setting new benchmarks across a wide range of tasks. Despite these successes, the ``black-box'' nature of ANNs poses significant challenges, particularly in domains requiring transparency, accountability, and adherence to strict ethical and regulatory standards~\cite{tripicchio2020deep}. Beyond compliance, there is a growing curiosity within the scientific community to better understand these models' internal mechanisms, both to satisfy fundamental questions about their function, leverage insights for debugging~\cite{adebayo2020debugging} and improvement, and even explore parallels with neuroscience~\cite{goodwin2022toward,vilasposition}.
These motivations have spurred the rapid growth of explainable artificial intelligence (XAI), a field dedicated to enhancing the interpretability of ANNs, thereby bridging the gap between machine intelligence and human understanding~\cite{doshivelez2017rigorous}.

Among XAI approaches, concept-based methods~\cite{kim2018interpretability} have emerged as a powerful framework for uncovering intelligible visual concepts embedded within the intricate activation patterns of ANNs~\cite{ ghorbani2019towards, zhang2021invertible, fel2023craft, graziani2023concept, vielhaben2023multi}. These methods excel in making the internal representations of ANNs more comprehensible by associating them with human-interpretable concepts.
Recently, concept extraction methods have been shown to be instances of dictionary learning~\cite{fel2023holistic}, where the goal is to map the activation space of an ANN into a (higher-dimensional), sparse ``concept space''. The resulting concept basis is often considered more interpretable. For such representations to be effective, they must be as sparse as possible while still enabling accurate reconstruction of the original activations from the learned basis using a dictionary of atoms---also called prototypes.
Historically, dictionary learning methods have included techniques such as Non-negative Matrix Factorization (NMF)~\cite{lee2001algorithms,kowal2024understanding,jourdan2023cockatiel} and K-Means~\cite{gersho1991vector,ghorbani2019towards}, while more recent approaches like Sparse Autoencoders (SAEs)~\cite{cunningham2023sparse, bricken2023monosemanticity, rajamanoharan2024jumping,gao2024scaling,thasarathan2025universal,poche2025consim} have emerged as a powerful alternative. SAEs achieve a good balance between sparsity and reconstruction quality, and their optimization frameworks scale well to large datasets. However, compared to traditional methods, SAEs suffer from a critical limitation: instability. As illustrated in Figure~\ref{fig:toy_rasae}, training two identical SAEs on the same data can yield significantly different dictionaries (concept bases), undermining their reliability and interpretability.

In this work, we address the instability of current SAEs by introducing two novel variants: Archetypal-SAE (\asae) and its relaxed counterpart~(\rasae). Building upon archetypal analysis \citep{cutler1994archetypal}, A-SAE constrains each dictionary atom to reside within the convex hull of the training data, ensuring a more stable and consistent set of basis elements across different training runs by virtue of this geometrical constraint. \rasae~further extend this framework by incorporating a small relaxation term, allowing for slight deviations from the convex hull to enhance modeling flexibility while maintaining stability.
Overall, our work makes the following contributions.

\begin{enumerate}
\item \textbf{Instability in SAEs.} We identify a critical limitation of current SAEs paradigms: two training runs on identical data can yield concept dictionaries that are largely distinct, hence compromising their reliability as an interpretability protocol.

\item \textbf{\asae: Archetypal anchoring to overcome instability.} To address the challenge above, we take inspiration from \citet{cutler1994archetypal}'s Archetypal analysis of dictionary learning and propose \asae, an SAE paradigm wherein the dictionary atoms (decoder directions) are forced to lie in the convex hull of sample representations. As we show, this geometrical anchoring yields substantial stability across training runs. Moreover, we show a mild relaxation of this protocol, which we title \rasae, uncovers meaningful and interpretable concepts in large-scale vision models.

\item \textbf{Rigorous evaluations with novel metrics and benchmarks.} We introduce novel metrics to evaluate the quality of dictionaries learned using different SAE paradigms, while proposing two new benchmarks for testing SAEs' ability to recover classification directions and to disentangle synthetic image mixtures, inspired by identifiability theory~\citep{locatello2019challenging, locatello2020weakly, higgins2017beta}. Our results provide substantial evidence that \asaes~ find more structured and coherent concepts. Further, to enable reproduction, we open-source our extensive codebase for large-scale SAE training on modern vision models.
\end{enumerate}
\section{Related Work}
\label{sec:related_work}
\paragraph{Sparse Coding \& Dictionary Learning.}
Dictionary learning~\cite{tovsic2011dictionary,rubinstein2010dictionaries,elad2010sparse,mairal2014sparse,dumitrescu2018dictionary} emerged as a fundamental approach for uncovering latent factors of a data-generating process in signal processing and machine learning, building upon early work in sparse coding \cite{olshausen1996emergence, olshausen1997sparse, lee2006efficient,foldiak2008sparse,rentzeperis2023beyond}.
The primary objective of these methods is to find a \emph{sparse} representation of input data~\cite{hurley2009comparing,eamaz2022building}, such that each data sample can be accurately reconstructed using a linear combination of only a small subset of dictionary atoms.
The field gained momentum with compressed sensing theory \cite{donoho2006compressed, candes2006robust, candes2008introduction, lopes2013estimating, rencker2019sparse}, which established theoretical foundations for sparse signal recovery.
Early dictionary learning methods evolved from vector quantization and K-means clustering \cite{gersho1991vector, lloyd1982least}, leading to more sophisticated approaches like Non-negative Matrix Factorization (NMF) and its variants \cite{lee1999learning,lee2001algorithms,gillis2020nonnegative, ding2008convex,kersting2010hierarchical,thurau2009convex,gillis2015exact}, Sparse PCA, \cite{aspremont2004sparse,zou2006sparse} and K-SVD \cite{aharon2006rm,elad2006image}.
The field further expanded rapidly \cite{wright2010sparse,chen2021low,tasissa2023kds} with online methods \cite{mairal2009online,kasiviswanathan2012online,lu2013online} and structured sparsity \cite{jenatton2010structured, bach2012structured,sun2014learning}.
Theoretical guarantees for dictionary learning emerged \cite{aharon2006uniqueness,spielman2012exact,hillar2015can,fu2018identifiability,barbier2022statistical,hu2023global}, alongside connections to deep learning \cite{baccouche2012spatio, tariyal2016deep,papyan2017convolutional,mahdizadehaghdam2019deep,tamkin2023codebook,yu2023white}.
Parallel developments in archetypal analysis from \citet{cutler1994archetypal} provided complementary perspectives on dictionary learning by focusing on extreme points in a set of observations \cite{dubins1962extreme}.\vspace{-5pt}

\vspace{-2mm}
\paragraph{Vision Explainability.} Early work in the field of Explainable AI primarily revolved around attribution methods, which highlight the input regions that most influence a model’s prediction~\cite{simonyan2013deep,zeiler2014visualizing,bach2015pixel,springenberg2014striving,smilkov2017smoothgrad,sundararajan2017axiomatic,Selvaraju_2019,Fong_2017,fel2021sobol,novello2022making,muzellec2023gradient}---in other words, \emph{where} the network focuses its attention to produce its prediction.
While valuable, attribution methods exhibit two core limitations: (\textit{\textbf{i}}) they provide limited information about the semantic organization of learned representations~\cite{hase2020evaluating,hsieh2020evaluations,nguyen2021effectiveness,fel2021cannot,kim2021hive,sixt2020explanations}, and
(\textit{\textbf{ii}}) they can produce incorrect explanations~\cite{adebayo2018sanity, ghorbani2017interpretation,slack2021counterfactual, sturmfels2020visualizing,hsieh2020evaluations,hase2021out}.
In other words, just because the explanations make sense to humans, we cannot conclude that they accurately reflect what is actually happening within the model---as shown by ongoing efforts to develop robust evaluation metrics for explainability~\cite{petsiuk2018rise, aggregating2020,jacovi2020towards,hedstrom2022quantus,fel2022xplique,hsieh2020evaluations, boopathy2020proper, lin2019explanations,idrissi2021developments}.

To overcome the constraints above, concept-based interpretability~\cite{kim2018interpretability} has gained traction. Its central objective is to pinpoint semantically meaningful directions---revealing not just \emph{where} the model is looking, but also \emph{what} patterns or concepts it employs---and to link these systematically to latent activations~\cite{bau2017network,ghorbani2019towards,zhang2021invertible, fel2023craft,graziani2023concept,vielhaben2023multi,kowal2024understanding,kowal2024visual}.
In line with this perspective, \citet{fel2023holistic}~demonstrate that popular concept-extraction methods : ACE~\cite{ghorbani2017interpretation}, ICE~\cite{zhang2021invertible}, CRAFT~\cite{fel2023craft}~and more recently SAEs~\cite{cunningham2023sparse, bricken2023monosemanticity, rajamanoharan2024jumping, gao2024scaling, surkov2024unpacking, gorton2024missing, bhalla2024interpreting} all address essentially the same dictionary learning task, albeit subject to distinct constraints (see Eq.~\ref{eq:dico_constraints}).

Within this broader context, we note Sparse Autoencoders (SAEs) have emerged as a highy scalable special case of dictionary learning: unlike NMF, Sparse-PCA, or other optimization problem, SAEs can be trained with the same algorithms and architectures used in modern deep learning pipelines, making them especially well-suited for large-scale concept extraction.
However, motivated by more ambitious use-cases of interpretability, e.g., to develop transparency and accountability~\cite{anwar2024foundational}, recent work has started to demonstrate limitations in existing SAE frameworks and propose improvements. Examples of such limitations include learning of overly specific or sensitive features~\cite{bricken2023monosemanticity, chanin2024absorption}, challenges in compositionality~\cite{wattenberg2024relational, lwcomposition}, and limited effects of latent interventions~\cite{bhalla2024towards, menon2024analyzing}.
In this paper, we aim to bring to attention an underappreciated challenge in SAEs' training --\textit{instability} -- wherein mere reruns of SAE training yield inconsistent interpretations (Figure~\ref{fig:toy_rasae}).

\section{(In)Stability of SAEs}
\label{sec:stability}
\begin{figure*}
    \vspace{-2mm}
    \centering
    \includegraphics[width=0.99\textwidth]{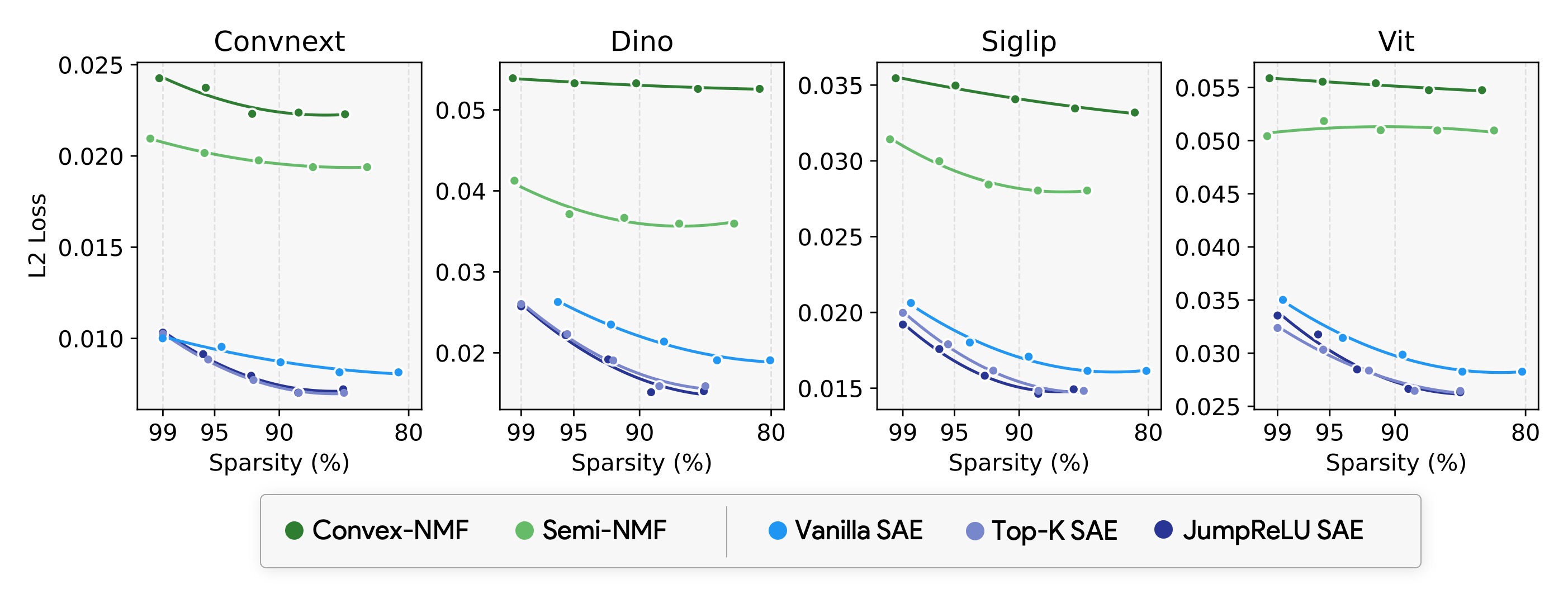}
    \vspace{-1mm}
    \caption{\textbf{SAEs are a promising direction for scalable concept extraction in vision.} Comparison of reconstruction error ($\ell_2$ Loss) and sparsity across four large-scale vision models: ConvNext, DINO, SigLIP, and ViT. The figure compares the performance of various dictionary learning methods, including classical approaches (Convex-NMF, Semi-NMF) and modern Sparse Autoencoders (Vanilla SAE, Top-K SAE, JumpReLU SAE). Each SAE is trained up to 250 million tokens per epoch over 50 epochs, demonstrating the scalability of SAEs and their ability to achieve superior trade-offs between reconstruction fidelity and sparsity compared to traditional methods.}
    \label{fig:pareto}
    \vspace{-0mm}
\end{figure*}

We first establish the challenge of instability in SAEs' training. To this end, we start by defining notations and providing background on the SAEs analyzed in this work, then offering a formal measure of instability in SAE training.\vspace{-3pt}

\paragraph{Notation.}
Throughout this work, $||\cdot||_2$ and $||\cdot||_F$ represent the $\ell_2$ and Frobenius norms, respectively. We define $[n] = \{1, \dots, n\}$, and consider a general representation learning framework where a deep learning model $\pred : \SX \to \SA$ maps inputs from an input space $\SX$ to a representation space $\SA \subseteq \R^d$. The representation is captured as a set of $n$ points arranged in a matrix $\A \in \R^{n \times d}$, with $\A_i$ denoting its $i$-th row, where $i \in [n]$ and $\A_i \in \R^d$. For any matrix $\X$, $\X \geq 0$ denotes that all elements of $\X$ are non-negative, and $\mathcal{P}(n)$ is the set of $n \times n$ signed permutation matrices. Given $\A \in \R^{n \times d}$, we define $\mathrm{cone}(\A) = \{\x \mid \x = \A \bm{v}, \; \bm{v} \in \R^n, \; \bm{v} \geq 0\}, ~ \text{and} ~ \mathrm{conv}(\A) = \{\x \mid \x = \A \bm{v}, \; \bm{v} \in \R^n, \; \bm{v} \geq 0, \; \bm{1}^\tr \bm{v} = 1\}$ as the conical and convex hulls, respectively, generated by the columns of $\A$.
\paragraph{Concept Extraction as Dictionary Learning.}
Concept extraction can be naturally framed as a dictionary learning problem, wherein a set of $n$ activations, represented by the matrix $\A \in \R^{n \times d}$, is approximated using a structured decomposition. The goal is to learn a \textbf{Dictionary} $\D$---also referred to as atoms~\cite{serre2006learning}, prototypes, or a codebook~\cite{tamkin2023codebook}---such that activations can be reconstructed as sparse linear combinations of these learned directions. The corresponding coefficients, known as \textbf{Codes} $\Z$, capture the latent structure of the activations and enforce interpretability by promoting sparsity or non-negativity. This leads to the general optimization framework:
\vspace{-1mm}
\begin{equation}
\begin{aligned}
\nonumber
    & (\Z^\star, \D^\star) = \argmin_{\Z, \D} || \A - \Z \D^\tr ||^2_F, \\
    \text{s.t.} ~~ &\begin{cases}
        \forall i, \Z_i \in \{ \e_1, \ldots, \e_k \}, & \text{(\textbf{ACE} - K-Means)}, \\
        \D^\tr \D = \mathbf{I}, & \text{(\textbf{ICE} - PCA)}, \\
        \Z \geq 0, \D \geq 0, & \text{(\textbf{CRAFT} - NMF)}, \\
        \Z = \bm{\Psi}_{\theta}(\A), ||\Z||_0 \leq K, & \text{(SAEs)}.
    \end{cases}
\end{aligned}
\label{eq:dico_constraints}
\end{equation}
Here, $\e_i$ denotes the $i$-th canonical basis vector, $\mathbf{I}$ is the identity matrix, and $\encoder$ is a neural network parameterized by $\theta$ (typically a single feedforward layer with bias). Notably, in Sparse Autoencoder (SAE) literature, the dictionary $\D$ is often identified with the decoder weight matrix, denoted as $\mat{W}_{\text{dec}}$. This optimization framework unifies various classical methods for concept extraction, ranging from clustering-based approaches~\cite{ghorbani2019towards}, orthogonal factorization methods~\cite{zhang2021invertible,graziani2021sharpening}, and nonnegative matrix factorization~\cite{fel2023craft} to modern sparse autoencoding techniques (SAEs)~\cite{cunningham2023sparse,bricken2023monosemanticity}.

Despite similar formulations to SAEs, solving K-Means, PCA, or NMF typically does not rely on backpropagation and lacks inherent batch-learning capabilities. This make SAEs particularly appealing for large-scale applications.
Additionally, optimization problems solved in multiple steps, such as NMF, Semi-NMF, or K-Means, can be interpreted as having a multi-layer nonlinear encoding (akin to a $\encoder$ with multiple layers)\footnote{A single optimization step in a NMF is akin to a linear model followed by a ReLU (projection onto the positive orthant).}. We note that the decoding process in these methods remains linear.

To compare these approaches, we generally evaluate the trade-off between two metrics: \textit{sparsity}, measured as the $\ell_0$ norm of $\Z$, and \textit{fidelity}, measured as the $\ell_2$ reconstruction error.
As a starting point, we propose to study this pareto frontier using state-of-the-art SAEs, including Jump-ReLU~\cite{rajamanoharan2024jumping}, TopK~\cite{gao2024scaling}, and a vanilla SAE with $\ell_1$ regularization~\cite{bricken2023monosemanticity}, alongside classical sparse dictionary learning methods. Since PCA is non-sparse, K-Means sparsity is fixed, and NMF is applicable only to non-negative activations, we adopt modified versions of these methods that are broadly applicable for concept extraction~\cite{kowal2024understanding,parekh2024concept}, such as Convex-NMF and Semi-NMF.
For SAEs, we use the following standard formulation:
\vspace{-2mm}
\begin{equation}
\encoder(\A) = \bm{\sigma}(\A \mat{W}_{\bm{\theta}} + \bm{b}),
\vspace{-1mm}
\end{equation}
where $\A$ denotes a linear encoder layer and $\bm{\sigma}(\cdot)$ is an elementwise non-linearity that depends on the specific SAE architecture.
For all SAEs, the resulting codes $\Z = \encoder(\A) \geq 0$ holds.
We employ a Silverman kernel \cite{silverman1984spline} for instantiating Jump-ReLU SAEs.
For vanilla SAEs, an $\ell_1$ regularization on $\mathbf{Z}$ is applied until the desired sparsity is achieved. Top-K and Jump-ReLU SAEs directly control or optimize an $\ell_0$ constraint.

Results in Figure~\ref{fig:pareto} illustrate that the SAE methods discussed above outperform optimization based dictionary learning methods in terms of reconstruction fidelity for fixed sparsity levels. While this positions SAEs as a compelling solution for concept extraction, as we show in the following, a significant drawback of current methods lies in their instability: minor changes to the dataset can lead to substantial variations in the learned dictionary.

\begin{figure*}
    \vspace{-1mm}
    \centering
    \includegraphics[width=0.99\textwidth]{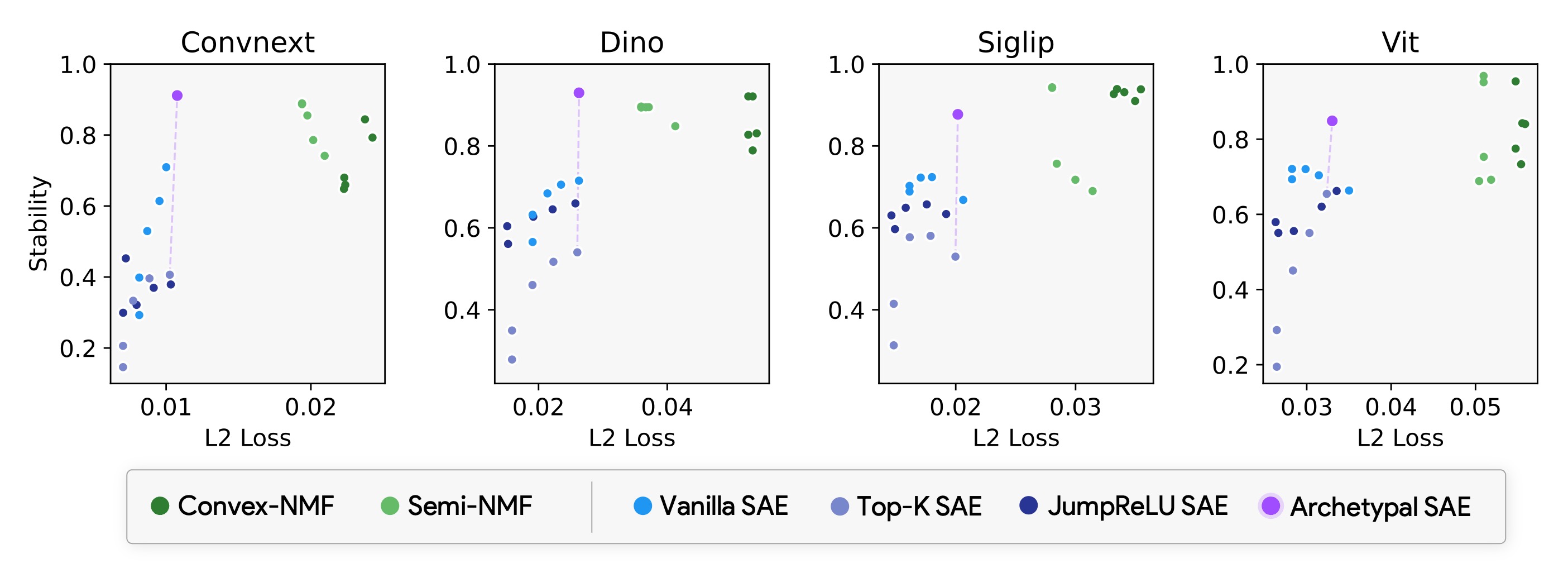}
    \vspace{0mm}
    \caption{\textbf{Stability-Reconstruction tradeoff} (optimal: top-left). We implement 5 dictionary learning methods on 4 models at 5 levels of sparsity each, as well as our A-SAE method. We show that SAEs exhibit instability (minor perturbations in the dataset can lead to significant changes in the learned dictionary), while traditional dictionary learning methods are more stable but worse at reconstructing the data. Archetypal-SAEs (ours) help mitigate this issue. We measure stability based on \Cref{eq:stability}: the optimal average cosine similarity between the dictionaries across 4 runs after finding the best alignment via the Hungarian algorithm.
    Archetypal-SAEs improve stability without compromising reconstruction fidelity, performing better on the stability-reconstruction tradeoff than existing methods.
    }
    \label{fig:instability}
\end{figure*}

\vspace{-7mm}
\paragraph{Measuring Instability.} To formally define a notion of stability in SAEs' training, we seek inspiration from the dictionary learning literature and prior works~\cite{spielman2012exact}, yielding the following metric for two dictionaries $\D \in \R^{n \times d}$ and $\D'$:
\begin{equation}
\text{Stability}(\D, \D') = \max_{\bm{\Pi} \in \mathcal{P}(n)} \frac{1}{n} \text{Tr}(\D^{\tr} \bm{\Pi} \D'),
\label{eq:stability}
\end{equation}
where we assume that each atom lies on the unit $\ell_2$-norm ball, i.e., $||\D_i||_2 = 1$ for all $i \in [n]$. This metric essentially measures the optimal average cosine similarity between the dictionaries after finding the best alignment via the Hungarian algorithm\footnote{In practice, we use the \texttt{linear\_sum\_assignment} function from \texttt{scipy}~\cite{virtanen2020scipy}, which solves the linear sum assignment problem by finding the optimal matching in a bipartite graph using the Jonker-Volgenant algorithm.}.

We evaluate stability across various vision models by altering only the random seed of the algorithm, while keeping the dataset unchanged across 4 runs. A similar trend is observed when the dataset is perturbed by as little as $5\%$ or $10\%$, consistent with prior findings~\cite{fel2023holistic, braun2024identifying, paulo2025sparse}. Results in Figure~\ref{fig:instability} show that while SAEs outperform classical dictionary learning methods in terms of reconstruction error ($\ell_2$-loss), they exhibit lower stability, with cosine stability values around $0.5$ for TopK SAE trained on DinoV2 with over 250 million tokens. As a first approximation, this implies that re-running the same algorithm with a different seed can result in dictionaries where only half the concepts remain, while the other half are new orthogonal concepts.
By contrast, the proposed Archetypal SAE, introduced below, achieves stability comparable to classical dictionary learning methods without compromising reconstruction fidelity.

\vspace{-3mm}
\section{Towards Archetypal SAEs}
\label{sec:archetypal}
\vspace{-2mm}

In their seminal work, \citet{cutler1994archetypal} proposed representing each data point as a convex combination of \textit{Archetypes}, which are themselves defined as convex combinations of data points. Concretely, the dictionary (the collection of Archetypes) can be constructed by multiplying the data by a row-stochastic matrix.
Drawing on these ideas, we propose a solution to the problem of instability in SAEs' training: the \textbf{Archetypal SAE} (\asae), which acts as a plug-and-play parameterization of the dictionary---i.e., the decoder matrix---and can be seamlessly integrated into any existing SAE, e.g., TopK~\cite{gao2024scaling} or Jump-ReLU~\cite{rajamanoharan2024jumping}.
\vspace{-2mm}
\paragraph{Formulation.} Let $\A \in \R^{n \times d}$ represent the data matrix (with $n$ data points in $\R^d$), and $\Delta^n = \{ \x \in \R^n \mid x_i \geq 0, \bm{1}^{\tr} \x = 1 \}$ denote the $(n\!-\!1)$-dimensional simplex in $\R^n$. A matrix $\W \in \R^{k \times n}$ is \emph{row-stochastic} if each row $\W_i$ belongs to $\Delta^n$. Define the set of row-stochastic matrices as
\begin{equation}
\Omega_{k,n} \triangleq \{ \W \in \R^{k \times n} \mid \W \geq 0, \W \bm{1}_n = \bm{1}_k \}.
\end{equation}
An archetypal dictionary $\D$ is then defined as follows.
\begin{equation}
\D = \W \A, \quad \text{where } \W \in \Omega_{k,n}.
\end{equation}
Hence, each row of $\D$ is a convex combination of the rows of $\A$, ensuring that each archetype originates from the data.\vspace{-2mm}

\paragraph{Geometric Interpretation.}
In standard SAEs (and most dictionary learning approaches), the dictionary $\D$ is \emph{free} in the sense that each atom $\D_i \in \R^d$ can be placed anywhere in the ambient space. From a geometric perspective, this flexibility allows the reconstructions $\Z\D$ to span regions that may exceed the convex hull of the data $\A$.
While this unconstrained setting enables greater expressivity, it comes with significant drawbacks. Specifically, small perturbations in the data or random initializations can lead to unstable solutions, resulting in dictionaries that differ drastically across training runs.
Moreover, if the dictionary atoms $\D_i$ point in directions unrelated to the data, probing these directions may fail to activate any meaningful mechanisms within the underlying model~\cite{makelov2023subspace}. This highlights the importance of ensuring that the dictionary aligns with ``real'' directions inherent to the data.

In contrast to above, the \emph{Archetypal} dictionary imposes a crucial geometric restriction: every dictionary atom is constrained to lie within the convex hull of $\A$, i.e., each dictionary atom $\D_i$ is a convex combination of samples' representations.
Thus, once multiplied by a nonnegative $\Z$, the reconstructions $\Z \D$ remain within the \emph{conic} hull of $\A$.
This anchoring within the data manifold precludes the emergence of pathological or out‐of‐sample directions, yielding stability gains shown empirically in Figure~\ref{fig:instability} and formalized in Proposition~\ref{prop:stability}.
Concretely, we always have:\vspace{-2mm}

\begin{equation*}
\D \in \mathrm{conv}(\A), ~~ \Z\D \in \mathrm{cone}(\A).
\end{equation*}

Moreover, one can restrict $\D$ to be formed from a subset $\C \subseteq \R^{n' \times d}$ (rather than all of $\A$) without losing expressivity, provided $\C$ contains the extreme points of $\A$~\cite{dubins1962extreme}. Indeed, in that case $\mathrm{cone}(\C) = \mathrm{cone}(\A)$, ensuring the same representational power (see Proposition~\ref{prop:archetypal-convex-conic}).

We must now ask the core question that makes SAEs exciting: does an SAE with an Archetypal dictionary, as defined above, scale? Indeed, optimizing the matrix $\W$ of size $k \times n$, where $n$ is the number of points and $k$ is the number of concepts, is often infeasible in practice (e.g., when the number of tokens $n > 10^8$). This limitation speaks to the importance of the matrix $\C$, which can either be a subset of $\A$ or elements within $\mathrm{conv}(\A)$, such as mixtures of points. Specifically, accessing extreme points is necessary to achieve perfect reconstruction~\cite{simon2011convexity}, but this is also intractable in practice for such high dimensions. %
However, we propose to address this with a relaxation.

\paragraph{Scaling Archetypal-SAE.}
\label{sec:scale}
To maintain the desirable properties of \asae~while addressing scalability, we propose using a reduced subset of points $\C$, with $n' \ll n$, chosen as centroids of $\A$ obtained via K-Means. We fix $\C$ and train only $\W$.
We apply a ReLU activation and normalize each row of $\W$ at each step to ensure that $\W \in \Omega_{k,n}$.
Experiments indicate that K-Means forms the most reliable method for distilling $\A$ into $\C$ (see Figure~\ref{fig:distillation}), compared to alternatives such as isolation forests~\cite{liu2008isolation}, convex hull computation in reduced dimensions, or outlier detection methods~\cite{scholkopf1999support,breunig2000lof} (details in Appendix \ref{ap:distillation}).
Hereafter, we use \asae~to refer to this implementation where we directly optimize $\W$ to find the best convex combination of points that reconstruct data.
\begin{figure}
    \vspace{-1mm}
    \centering
    \includegraphics[width=1.01\linewidth]{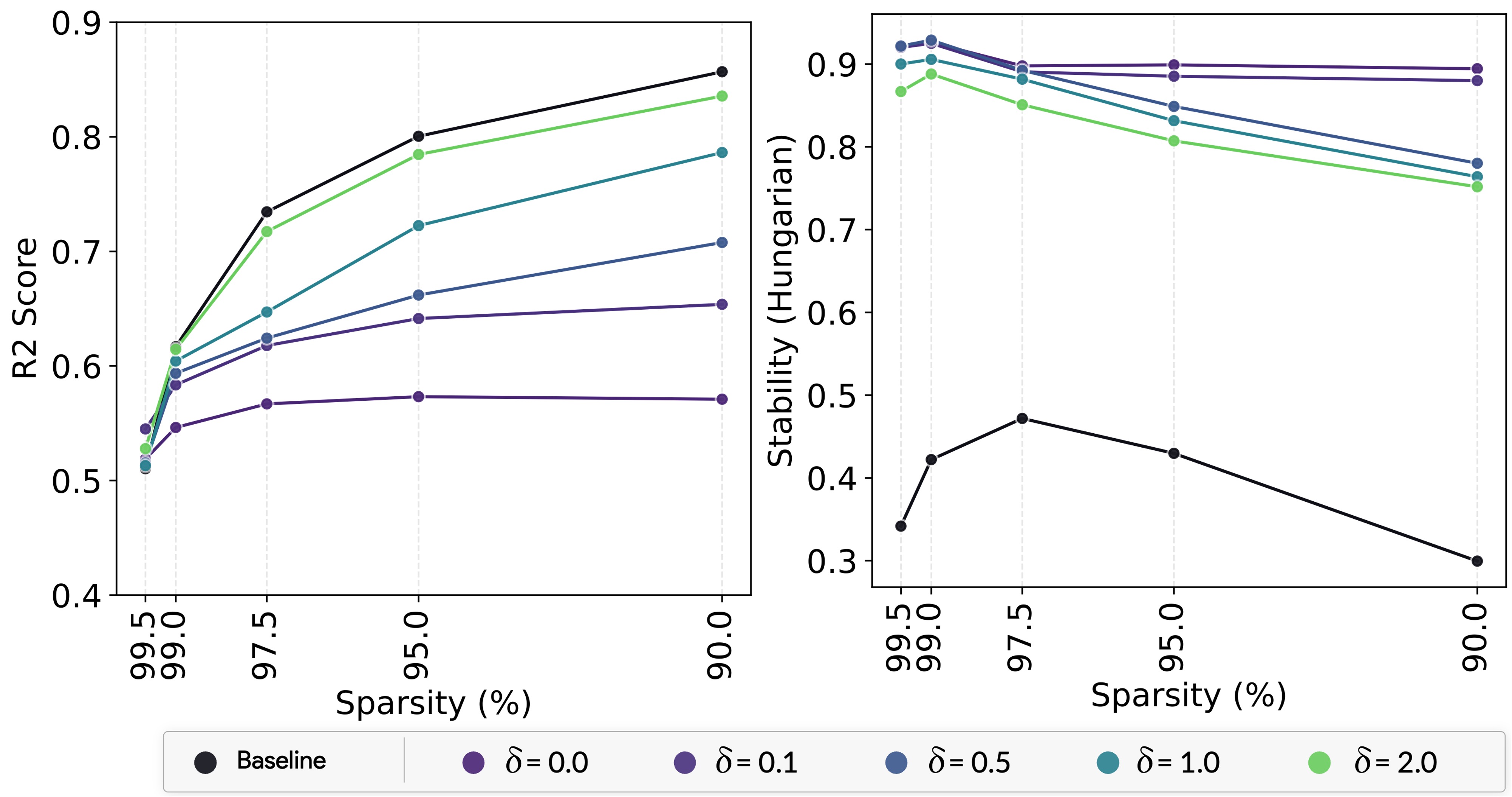}
    \caption{\textbf{Impact of the Relaxation Parameter ($\delta$).} Enumerating extreme points is infeasible in practice; therefore, we introduce a small relaxation parameter $(\delta)$ that allows exploration beyond the convex hull of $\mathbf{C}$. The magnitude of this relaxation enables the Archetypal SAE to achieve performance comparable to the unconstrained TopK SAE denoted as Baseline (\textbf{left}) while maintaining excellent stability (\textbf{right}).\vspace{-1mm}}
    \label{fig:ablation_delta}
\end{figure}
To enable a controlled degree of flexibility beyond $\mathrm{conv}(\C)$, we introduce a mild relaxation term $\bm{\Lambda} \in \R^{k \times d}$, a matrix of the same dimensions as the dictionary, with a small norm constraint $||\bm{\Lambda}||_2^2 \leq \delta$. This leads to a relaxed formulation, which we call the \textbf{Relaxed Archetypal SAE} (\rasae). Unlike standard A-SAEs, \rasae~learns both the convex combination weights $\W$ and the relaxation term $\bm{\Lambda}$, while ensuring that deviations from the convex hull remain minimal. Formally, we define the dictionary as:
$$
\D = \W \C + \bm{\Lambda}, \quad s.t. \quad \W \in \Omega_{k,n} ~~ \text{and} ~~ ||\bm{\Lambda}||_2^2 \leq \delta.
$$
Here, only $\W$ and $\bm{\Lambda}$ are trainable parameters, as detailed in the pseudocode (Figure~\ref{code:archetypal}). This implementation ensures that $\W$ remains row-stochastic and that the deviation term $\bm{\Lambda}$ stays within the prescribed norm constraint. As shown in Figure~\ref{fig:ablation_delta}, \rasae~achieves reconstruction performance on par with conventional Top-K SAEs while maintaining the stability benefits of the archetypal constraint.

\begin{figure}[h]
\centering
\noindent\begin{minipage}{0.5\textwidth}
\begin{RoundedListing}
class ArchetypalDictionary(nn.Module):
  def __init__(self, C, k, $\delta$=1.0):
    super().__init__()
    n$'$, d = C.shape
    self.register_buffer("C", C)
    self.W = nn.Parameter(torch.eye(k, n$'$))
    self.$\Lambda$ = nn.Parameter(torch.zeros(k, d))
    self.$\delta$ = $\delta$

  def forward(self, Z):
    with torch.no_grad():
      W = torch.relu(self.W)
      W /= W.sum(dim=-1, keepdim=True)
      $\Lambda$ *= torch.clamp(
          self.$\delta$ / $\Lambda$.norm(dim=-1, keepdim=True),
          max=1
      )
    D = W @ self.C + $\Lambda$
    return Z @ D
\end{RoundedListing}
\end{minipage}
\caption{\textbf{Pseudocode for Relaxed Archetypal SAE (\rasae).} This implementation ensures that dictionary atoms remain close to convex hull of the data $\mathrm{conv}(\C)$ while allowing controlled deviations for better flexibility.}
\label{code:archetypal}
\end{figure}

\section{Experiments}
This section is organized into five parts.
We begin by describing the experimental setup in detail.
Then, we introduce a novel set of theoretical metrics designed to better understand the differences between optimization-based dictionary learning methods, standard SAEs, and the proposed Archetypal SAEs.
Following this, we present a new benchmark inspired by identifiability theory, which serves to evaluate the plausibility and uniqueness of the learned representations.
We then propose a second, more practical benchmark to assess whether the models can retrieve ``true'' directions or concepts effectively utilized by the models.
Finally, we conclude with qualitative examples showcasing the concepts discovered by SAEs, particularly when applied to the DINOv2 model, providing insight into the interpretability and utility of the learned representations.

\vspace{-3mm}
\paragraph{Setup.}
We evaluate five models: DINOv2~\cite{darcet2023vision,oquab2023dinov2}, SigLip~\cite{zhai2023sigmoid}, ViT~\cite{dosovitskiy2020image}, ConvNeXt~\cite{liu2022convnet}, and ResNet50~\cite{he2016deep}, sourced from the \texttt{timm} library~\cite{wightman2019pytorch}.
Unless specified, we trained overcomplete dictionaries with size $k = 5 \times$ the feature dimension (e.g., $768 \times 5$ for DINOv2 and $2048 \times 5$ for ConvNeXt). Models were trained on the ImageNet dataset ($\sim1.28$M images), resulting in over $60$M tokens per epoch for ConvNeXt ($7 \times 7$ tokens/image) and over $250$M tokens per epoch for DINOv2 ($14 \times 14$ patches/image) across 50 epochs.
Semi-NMF and Convex-NMF were trained using gradient descent with accumulation and $\ell_1$ regularization to control sparsity, while the \rasae~was applied atop a TopK SAE to maintain consistent sparsity. To compute $\C$, we applied K-Means clustering to the entire dataset, reducing it to $32,\!000$ centroids, which achieved reconstruction error comparable to the unconstrained SAE. The data matrix $\A$ was element-wise standardized.
All the SAEs were trained using the \texttt{Overcomplete} library\footnote{\url{https://github.com/KempnerInstitute/overcomplete}}.

\subsection{Dictionary Learning Metrics}
\label{sec:metrics}
\begin{table*}[h!]
\vspace{-2mm}
\centering
\resizebox{0.75\linewidth}{!}{%
\begin{tabular}{lcccccc}
\toprule
\textbf{Metric} & \textbf{Van. SAE} & \textbf{TopK SAE} & \textbf{Jump SAE} & \textbf{SNMF} & \textbf{CNMF} & \rasae \\
\midrule
$R^2$ ($\uparrow$) & 83.94 & \underline{89.52} & \textbf{89.92} & 67.43 & 55.48 & 89.34 \\
Dead Codes ($\downarrow$) & \textbf{0.00} & \textbf{0.00} & \textbf{0.00} & 0.064 & 0.031 & 0.02 \\
\midrule
Stability ($\uparrow$) & 0.710 & 0.542 & 0.539 & 0.925 & \textbf{0.933} & \underline{0.927} \\
Max Cosine ($\uparrow$) & 0.997 & 0.993 & 0.994 & \textbf{0.999} & \textbf{0.999} & \textbf{0.999} \\
OOD Score ($\downarrow$) & 0.451 & 0.551 & 0.551 & 0.430 & \underline{0.087} & \textbf{0.060} \\
\midrule
Stable Rank ($\downarrow$) & 86.8 & 141.6 & 128.0 & \textbf{5.38} & 6.65 & \underline{5.89} \\
Eff. Rank ($\downarrow$) & 363 & 372 & 371 & \textbf{186} & \underline{289} & 310 \\
Coherence ($\downarrow$) & 0.838 & \underline{0.728} & \textbf{0.560} & 0.999 & 0.999 & 0.973 \\
\midrule
Connectivity ($\uparrow$) & 0.000 & 0.002 & 0.003 & \textbf{0.243} & 0.138 & \underline{0.159} \\
Neg. Inter. ($\downarrow$) & 39.99 & 135.7 & 243 & \underline{0.005} & \textbf{0.002} & 0.012 \\
\bottomrule
\end{tabular}
}
\vspace{-2mm}
\caption{
\textbf{Quantitative comparison of the dictionary learning methods on DINOv2,} using a 90\% sparse, overcomplete dictionary with 2000 concepts. SAE methods achieve the highest reconstruction performance. C-NMF, S-NMF, and Archetypal methods excel in consistency, ensuring stability across runs and that found concepts are close to real data (OOD). Additionally, these methods demonstrate superior dictionary structure (Stable and Eff.\ Rank) and codes structure (Connectivity and Neg\ Inter.), indicating patterns in the inferred concepts and structured sparsity.
}
\label{table:sae_metrics}
\vspace{-4mm}
\end{table*}

To improve SAEs, it is essential to deepen our understanding of the solutions they yield.
We thus evaluate both standard and novel sets of metrics that evaluate dictionary learning methods across four key dimensions: (\textbf{\textit{i}}) sparse reconstruction, (\textbf{\textit{ii}}) consistency, (\textbf{\textit{iii}}) structure in the dictionary ($\D$), and (\textbf{\textit{iv}}) structure in the codes ($\Z$), all reported in Tab.~\ref{table:sae_metrics} (see App.~\ref{ap:metrics} for formal definitions).
Overall, we find Archetypal SAEs achieve a strong balance between reconstruction performance, consistency, and identification of structure.

\textbf{\textbf{\textit{i}}) Sparse Reconstruction.}
Prior work~\cite{bricken2023monosemanticity} commonly assesses the quality of the learned dictionary to reconstruct the original data under sparsity constraints via metrics such as $R^2$, sparsity ($\ell_0$ norm), and the effective usage of dictionary atoms (e.g., dead codes). We find that Archetypal SAEs perform on-par with existing SAEs and outperform NMF methods in reconstruction, achieving higher $R^2$ scores for comparable sparsity levels.

\textbf{\textbf{\textit{ii}}) Consistency.}
Excelling in reconstruction does not guarantee that the learned solution aligns with the underlying data distribution or is stable across training runs.
To this end, we measure stability, which assesses the consistency of learned dictionaries across different initializations or perturbations in the data, and the OOD score, which quantifies how close the dictionary atoms are to real data points, i.e., whether learned concepts remain grounded and interpretable.
Our findings indicate that SAEs perform poorly in consistency, showing both low stability (as evidenced in Figure~\ref{fig:instability}) and suboptimal OOD scores. In contrast, Archetypal SAEs significantly enhance stability and OOD score without sacrificing reconstruction performance.
In Proposition~\ref{prop:ood}, we provide theoretical arguments showing that, under mild assumptions, the OOD score of Archetypal SAEs is inherently lower-bounded, ensuring that learned dictionary atoms remain well-anchored within the data.

\textbf{\textbf{\textit{iii}}) Structure in the Dictionary ($\D$).}
This dimension examines whether the learned dictionary exhibits meaningful patterns. A well-structured dictionary may reveal meta-concepts or decomposable higher-level abstractions. Metrics such as stable rank, effective rank, and coherence provide insights into the compactness and interpretability of the dictionary.
In particular, among solutions with comparable reconstruction performance, a more structured dictionary is preferable due to its potential for higher interpretability and organization. Across runs, methods like CNMF, SNMF, and Archetypal dictionary learning consistently yield dictionaries with better structure, reflecting their capacity for capturing higher-level patterns within the data.
We also provide theoretical arguments in Proposition~\ref{prop:rank}, demonstrating that the rank of Archetypal dictionaries is inherently bounded by the rank of the data matrix.

\textbf{\textbf{\textit{iv}}) Structure in the Codes ($\Z$).}
The structure in the encoding space is equally important, as it determines how concepts are combined to reconstruct the data. Connectivity, measured as the $\ell_0$ norm of $\Z\Z^\tr$, reflects the combinatorial diversity of the concepts. High connectivity enables complex reconstructions, while low connectivity highlights structural sparsity and simpler patterns.
Additionally, negative interference (Neg. Inter.) quantifies the simultaneous activation of conflicting concepts, which can cancel each other out. Archetypal SAEs and optimization-based dictionary learning approaches reliably produce more structured codes with reduced interference, enhancing the coherence of their representations.

\vspace{-3mm}
\subsection{Plausibility Benchmark}
\vspace{-3mm}

\begin{table}[h!]
\vspace{-4mm}
\centering
\hspace{-2mm}
\resizebox{1.03\linewidth}{!}{%
\begin{tabular}{lccccccc}
\toprule
\textbf{Dict. size (k)} & \textbf{512} & \textbf{1k} & \textbf{2k} & \textbf{4k} & \textbf{8k} & \textbf{16k} & \textbf{32k} \\
\midrule
\multicolumn{8}{l}{\textbf{ConvNeXt}} \\
\midrule
Baseline (TopK SAE)& 0.1681 & 0.1686 & \worst{0.1668} & \worst{0.1671} & \worst{0.1684} & \worst{0.1692} & \worst{0.1684} \\
A-SAE ($\delta=0$) & \best{0.2172} & \best{0.3046} & \best{0.3597} & 0.3887 & 0.3957 & 0.3984 & 0.3999 \\
RA-SAE ($\delta=0.01$) & 0.1973 & 0.2887 & 0.3581 & \best{0.3900} & 0.4007 & 0.4038 & 0.4045 \\
RA-SAE ($\delta=0.1$) & 0.1270 & 0.1596 & 0.2106 & 0.2681 & 0.3280 & 0.3674 & 0.3845 \\
RA-SAE ($\delta=1.0$) & \worst{0.1172} & \worst{0.1475} & 0.2124 & 0.3116 & \best{0.4342} & \best{0.5203} & \best{0.5581} \\
\midrule
\multicolumn{8}{l}{\textbf{ResNet}} \\
\midrule
Baseline (TopK SAE)& \worst{0.2295} & \worst{0.2484} & \worst{0.3055} & \worst{0.3203} & \worst{0.3301} & \worst{0.3014} & \worst{0.3125} \\
A-SAE ($\delta=0$) & 0.5920 & 0.5985 & 0.5992 & 0.6013 & 0.6029 & 0.6105 & 0.6133 \\
RA-SAE ($\delta=0.01$) & 0.5905 & 0.5985 & 0.5991 & 0.6013 & 0.6039 & 0.6106 & 0.6136 \\
RA-SAE ($\delta=0.1$) & 0.5777 & 0.5932 & 0.6002 & 0.6039 & 0.6046 & 0.6067 & 0.6083 \\
RA-SAE ($\delta=1.0$) & \best{0.6151} & \best{0.6165} & \best{0.6173} & \best{0.6189} & \best{0.6200} & \best{0.6208} & \best{0.6213} \\
\midrule
\multicolumn{8}{l}{\textbf{ViT}} \\
\midrule
Baseline (TopK SAE) & \worst{0.1317} & \worst{0.1589} & \worst{0.1984} & \worst{0.2265} & \worst{0.2595} & \worst{0.2807} & \worst{0.2939} \\
A-SAE ($\delta=0$) & 0.2079 & 0.2459 & 0.2820 & 0.3096 & 0.3361 & 0.3581 & 0.3721 \\
RA-SAE ($\delta=0.01$) & 0.2102 & 0.2490 & 0.2861 & 0.3132 & 0.3200 & 0.3642 & 0.3821 \\
RA-SAE ($\delta=0.1$) & \best{0.2278} & \best{0.2747} & \best{0.3153} & 0.3410 & 0.3496 & 0.3912 & 0.4103 \\
RA-SAE ($\delta=1.0$) & 0.1497 & 0.2123 & 0.2936 & \best{0.3624} & \best{0.4277} & \best{0.4786} & \best{0.5014} \\
\bottomrule
\end{tabular}
}
\caption{\textbf{Plausibility Benchmark Results.} The Plausibility Score measures the alignment between the learned dictionary concepts and the classification head's directions. \rasae~achieves a significantly higher score compared to a TopK SAE (baseline). Best scores are wrapped in \best{green}, and worst scores are wrapped in \worst{red}.}
\label{table:plausibility_benchmark}
\vspace{-4mm}
\end{table}

\begin{figure*}
    \centering
    \vspace{-2.5mm}
    \includegraphics[width=0.98\linewidth]{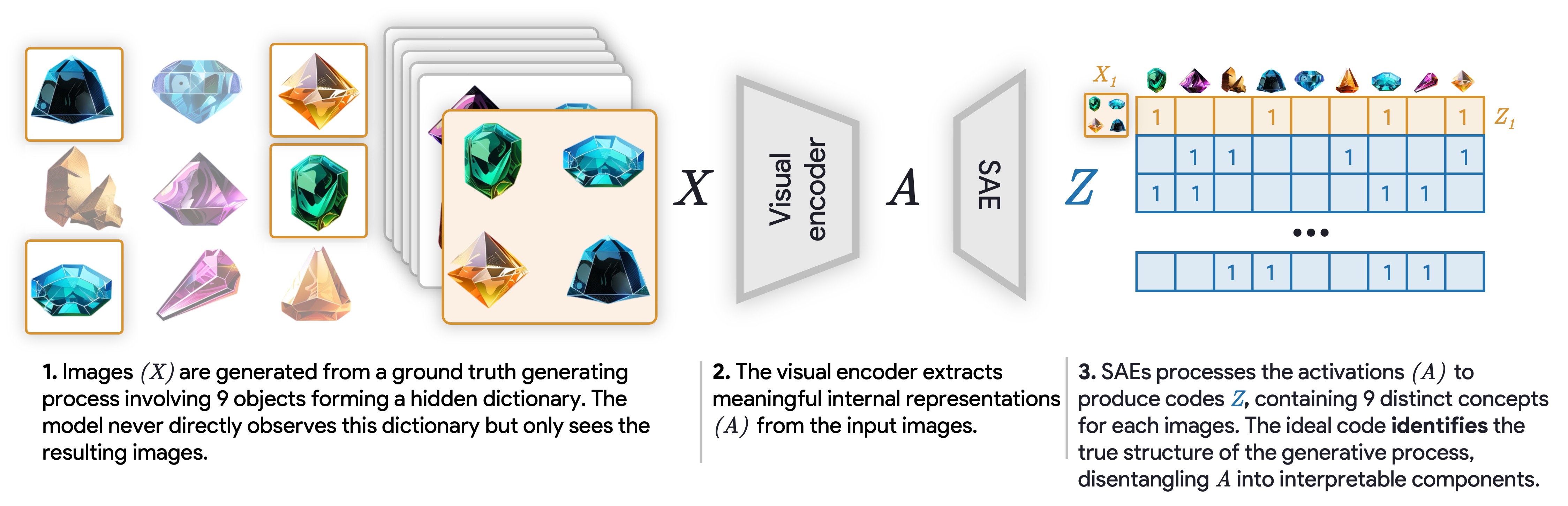}
    \vspace{-2mm}
    \caption{\textbf{Soft Identifiability benchmark.} This example uses the ``gems'' dataset, part of the 12 identifiability benchmarks we introduce. The goal is to evaluate whether SAEs (or any dictionary learning method under study) can disentangle and recover each object from the hidden ground truth generative process. By analyzing the model's ability to assign distinct codes to the underlying concepts, we test its capacity to reconstruct and interpret the true structure of the data.}
    \label{fig:identifiability}
    \vspace{-4mm}
\end{figure*}

We evaluate SAEs' ability to recover true classification directions by assessing whether the learned dictionary $\D$ aligns with the classifier's final layer weights $\{\bm{v}_1, \dots, \bm{v}_c\}$, where $c$ is the number of classes.
Specifically, for each class vector $\bm{v}_i$, we compute the most aligned dictionary atom $\D_j$ and average the alignment score: $\text{Plausibility} = \frac{1}{c} \sum_{i=1}^{c} \max_{j} \langle \bm{v}_i, \D_j \rangle$.
A score of 1 indicates perfect alignment, while a score of 0 implies that all concepts lie in the classifier's null space, making probing ineffective.
This metric could help us detect potential hallucinations if concepts diverge too much from true classification directions, and is broadly similar to \citet{karvonen2024measuring, mayne2024can}'s evaluation of whether SAEs infer known linear features in toy settings.
Results are reported in Table~\ref{table:plausibility_benchmark}.
We find that classical SAEs, even with extremely large dictionaries, achieve limited alignment with the classification directions (inline with prior work). %
In contrast, \rasae~significantly enhances this alignment, recovering a substantial portion of the true classification directions.
This demonstrates \rasae's efficacy in producing semantically meaningful dictionaries.
\vspace{-2mm}
\subsection{Soft Identifiability Benchmark}
\label{subsec:identifiability_benchmark}
\vspace{-2mm}
\begin{table}[t]
\vspace{-1mm}
\centering
\resizebox{0.8\linewidth}{!}{%
\begin{tabular}{lcccc}
\toprule
\textbf{Method} & \textbf{DINOv2} & \textbf{ResNet} & \textbf{SigLIP} & \textbf{ViT} \\
\midrule
KMeans       & 0.7678 & 0.7624 & 0.7684 & 0.7702 \\
ICA          & 0.8092 & 0.8370 & 0.8243 & 0.8267 \\
Sparse PCA   & 0.7981 & 0.8318 & 0.8069 & 0.8082 \\
SVD          & 0.7979 & 0.8291 & 0.8062 & 0.8075 \\
SemiNMF      & 0.8297 & 0.8327 & 0.8358 & 0.8423 \\
ConvexNMF    & 0.7645 & 0.7582 & 0.7639 & 0.7634 \\
PCA          & 0.7979 & 0.8291 & 0.8062 & 0.8075 \\
Vanilla      & 0.8047 & 0.8167 & 0.8126 & 0.8223 \\
TopK         & 0.8135 & 0.8150 & 0.8289 & 0.8328 \\
Jump         & 0.8010 & 0.7988 & 0.8131 & 0.8053 \\
A-SAE        & \textbf{0.9482} & \textbf{0.9631} & \textbf{0.9602} & \textbf{0.9615} \\
RA-SAE       & \underline{0.9447} & \underline{0.9602} & \underline{0.9585} & \underline{0.9586} \\
\bottomrule
\end{tabular}
}
\vspace{-0mm}
\caption{\textbf{Soft Identifiability benchmark Across Models and Methods.} This table presents the average accuracy scores for various methods evaluated across four different models: DINO, ResNet, SigLIP, and ViT. Best-performing scores for each model are in \textbf{bold} and second best are \underline{underlined}. Full results are available in Appendix \ref{ap:identifiability}.}
\label{table:summary_accuracy_scores}
\vspace{-0mm}
\end{table}
Recent work on disentangled representation learning often evaluates whether an autoencoder trained to reconstruct samples from a data-generating process learns to represent its underlying concepts~\citep{locatello2019challenging, locatello2020weakly, von2021self, gresele2020incomplete, khemakhem2020variational, schott2021visual, zimmermann2021contrastive, menon2024analyzing}.
Identifiability theorems on the topic~\citep{locatello2019challenging, locatello2020weakly} have however shown that unless the autoencoding architecture possesses the right inductive biases that match the generative process, there is no guarantee concepts underlying the data will map onto the autoencoder's latents. Since these results do not make assumptions about the data modality, they remain valid for a standard SAE training setup, e.g., similar to our experiments above. Then, an intriguing experiment involves evaluating whether when trained on representations of samples from a toy data-generating process with predefined concepts, \asae, or any other SAE architecture, develops latents capturing said concepts---if it does, then that is strongly suggestive of the SAE possessing the right inductive biases, i.e., it captures the mechanism via which a model encodes concepts in its representations.

Motivated by the above, we propose a Soft Identifiability Benchmark.
Specifically, we construct twelve synthetic datasets, each comprising images formed by collaging four distinct objects (e.g., different types of gems) sampled from a pre-defined set.
Each dataset is processed through a pre-trained vision model to obtain pooled activations.
Ideally, when trained on these activations, the SAE is able to recover the original objects as distinct concepts within its dictionary.
We then assess performance by checking whether each object class has a corresponding concept that activates above a threshold $\lambda$ when an object $y_j$ appears in the image.
Formally, for each image, we feed it into a model and then into the trained SAE to get a concept-label pair $(\vec{z}, \vec{y})$, where $\vec{z} \in \R^k$ represents the $k$ concept values and $\vec{y} \in \R^c$ denotes the $c$ class labels. We then define the accuracy for the class $j$ as: $\text{Accuracy}_j = \max_{\lambda \in \R, i \in [k]} ~~\mathbb{P}_{(\vec{z}, \vec{y})}((z_i > \lambda) = y_j)$.

\begin{figure*}[t!]
\vspace{-0mm}
    \centering
    \includegraphics[width=0.85\textwidth]{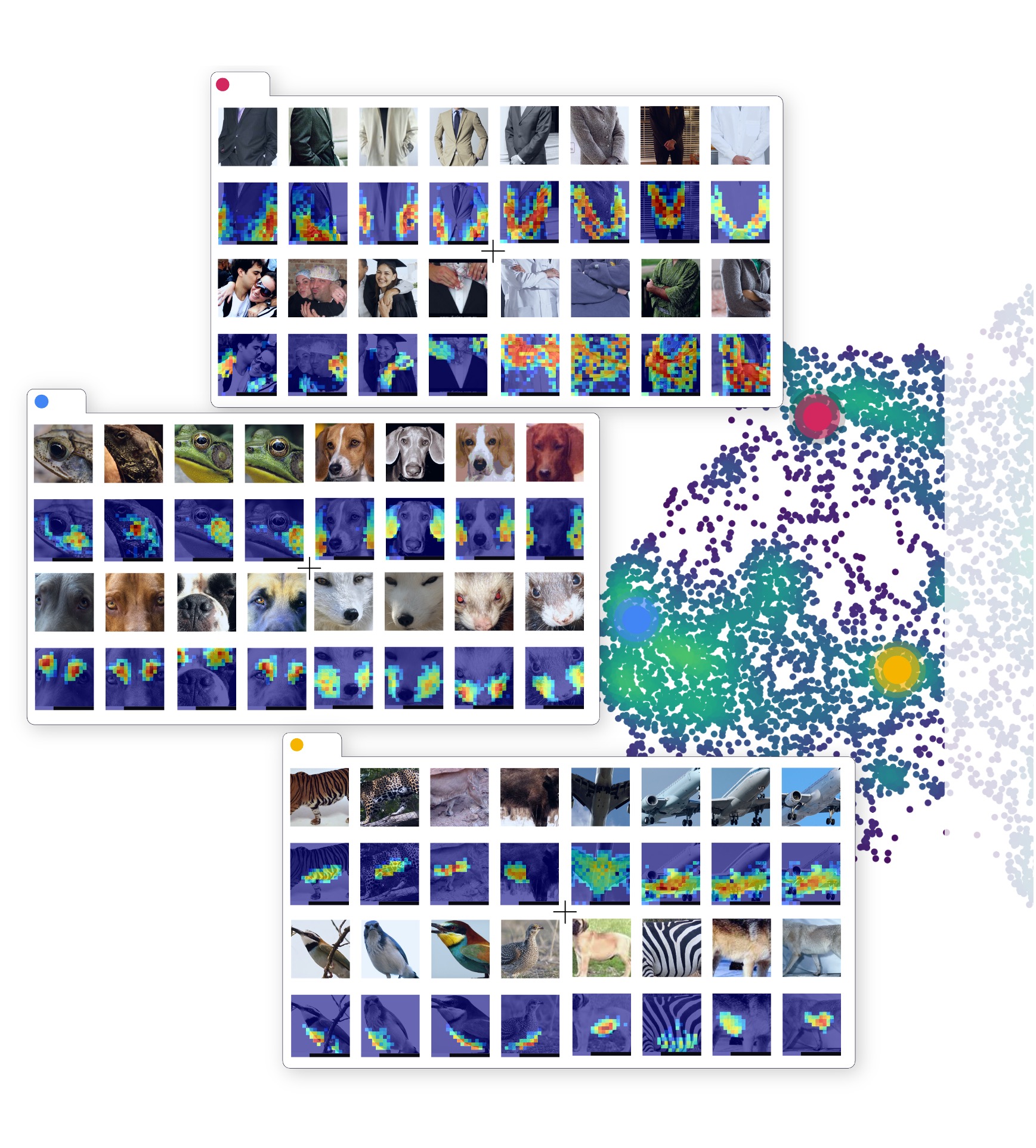}
    \caption{\textbf{Examples of 3 Concept Clusters in DinoV2.} Each cluster contains 4 example concepts. \textcolor{red}{$\bullet$} Complex hand positions, ranging from hands in pockets to hands on another person. \textcolor{yellow}{$\bullet$} Abstract ``under'' concepts, linking animals and objects, such as birds, zebras, felines, and airplanes, while focusing on lower regions. \textcolor{blue}{$\bullet$} Fine-grained animal facial features, including ears, eyebrows, and cheeks.}
    \label{fig:umap_qualitative}
    \vspace{-4mm}
\end{figure*}

\subsection{Qualitative Examples}
\label{ap:qualitative}

\begin{figure*}[t!]
    \centering
    \includegraphics[width=0.85\textwidth]{./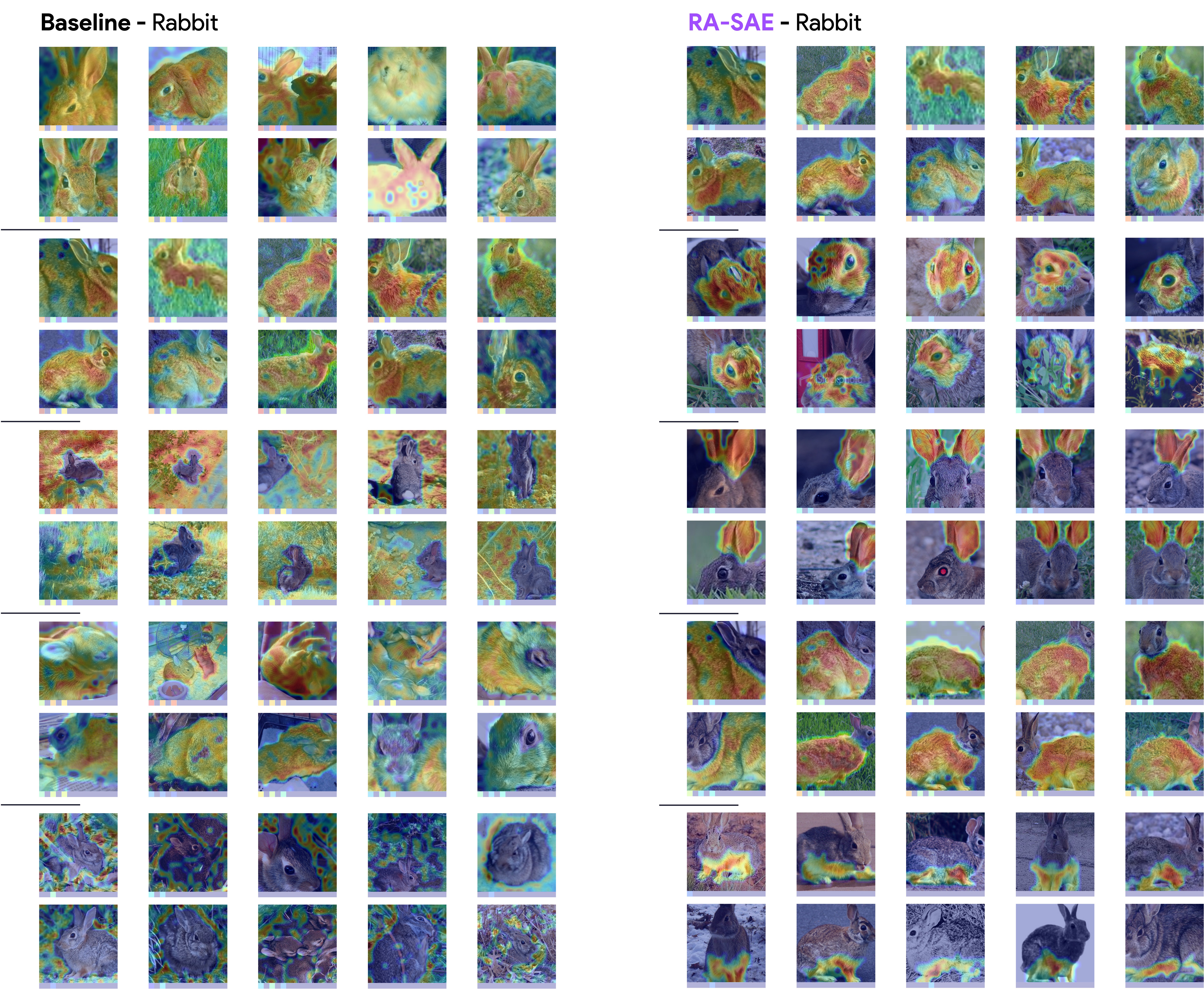}
    \caption{\textbf{Top-5 Concepts for the Rabbit Class in DinoV2.} The RA-SAE on top of TopK identifies distinct and fine-grained concepts, including rabbit ears, body, face, and paws. These concepts exhibit greater structure and granularity compared to those found by the unconstrained TopK SAE (baseline) method.}
    \label{fig:rabbit_top_concepts}
\end{figure*}

This section provides qualitative insights into the concepts learned by DinoV2~\cite{oquab2023dinov2,darcet2023vision}. We trained an Archetypal SAE on DinoV2-B with 4 registers and reported key performance metrics in Figure~\ref{fig:detail_rasae_dino}.
\rasae~uncovers unexpected concepts, such as shadow-based features (potentially linked to depth reasoning), a context-dependent ``barber'' concept (activating for barbers but not clients), and fine-grained edge detection in flower petals (Figure~\ref{fig:surprising}). It also learns more structured within-class distinctions (e.g., separating rabbit ears, faces, and paws) compared to TopK-SAEs (Figure~\ref{fig:rabbit_top_concepts}). Finally, its dictionary forms clear clusters, grouping semantically related features like fine-grained animal faces or spatial concepts such as ``underneath'' (Figure~\ref{fig:umap_qualitative}).

To explore the dictionary learned by the model, we analyzed three clusters of concepts, as illustrated in Figure~\ref{fig:umap_qualitative}. Specifically:
\textcolor{red}{$\bullet$} A cluster of concepts capturing complex hand positions, ranging from hands in pockets to hands resting on another person (e.g., on a shoulder).
\textcolor{yellow}{$\bullet$} An abstract ``under'' concept cluster, linking entities such as birds, zebras, felines, and airplanes, while highlighting the lower regions of objects.
\textcolor{blue}{$\bullet$} A cluster representing fine-grained facial concepts in animals, including ears, eyebrows, and cheeks.

In addition, we identified surprising and specific concepts among the $16,000$ dictionary atoms, shown in Figure~\ref{fig:surprising_main}. These include:
\textbf{A)} A concept highlighting tokens corresponding to shadows of dogs, suggesting that DinoV2 may use shadows as a feature, potentially contributing to its depth estimation or 3D reasoning capabilities.
\textbf{B)} A ``barber'' concept that activates exclusively on tokens representing barbers but not on the individuals receiving haircuts or shaves.
\textbf{C)} A fine-grained visual concept that activates along the contours or edges of flower petals.

Finally, we present the top-5 concepts associated with the rabbit class on DinoV2 in Figure~\ref{fig:rabbit_top_concepts}. The concepts learned by RA-SAE are distinct and exhibit greater structure compared to their TopK counterparts. For instance, RA-SAE successfully identifies separate concepts for rabbit ears, body, face, and paws, demonstrating its ability to disentangle fine-grained features within a class. These results suggest that RA-SAE provides a more organized and meaningful decomposition of concepts.

\begin{figure}[t!]
    \centering
    \includegraphics[width=0.48\textwidth]{./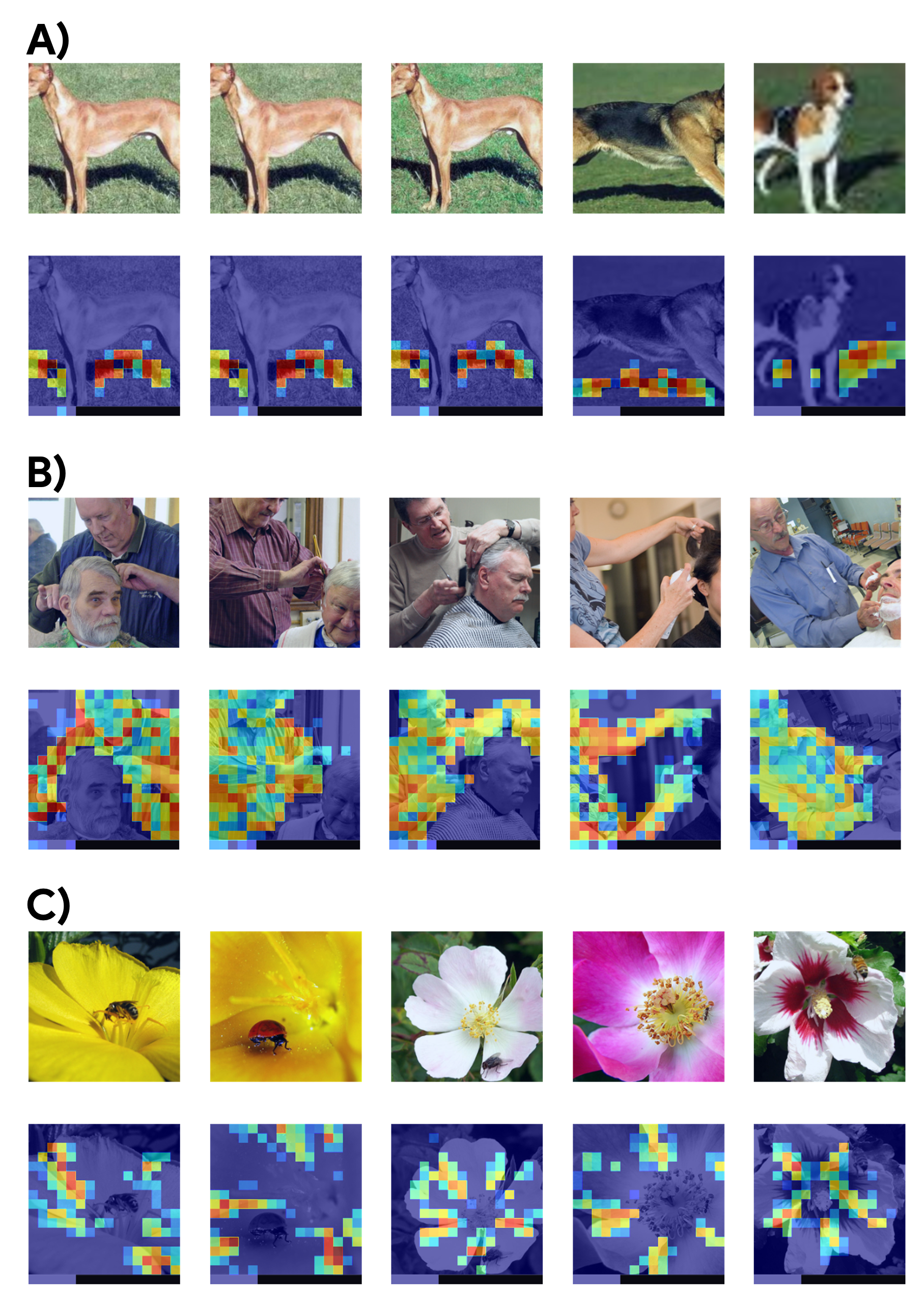}
    \caption{\textbf{Exotic Concepts in DinoV2.} \textbf{A)} Highlights tokens in shadows of dogs, suggesting shadow-based features and potential depth reasoning. \textbf{B)} A ``barber'' concept exclusively active for barbers, not their clients. \textbf{C)} A fine-grained concept focusing on petal edges or contours.}
    \label{fig:surprising}
\end{figure}

\vspace{0mm}
\section{Conclusion}
\label{sec:conclusion}
We identified a fundamental instability in Sparse Autoencoders, where identical training runs can yield divergent dictionaries, limiting their reliability for interpretability. To address this, we introduced Archetypal SAEs (\asae), which constrain dictionary atoms to the convex hull of the data, significantly enhancing stability while preserving expressivity. We further proposed a relaxed variant, \rasae, which balances reconstruction quality with meaningful concept discovery in large-scale vision models.
To rigorously assess these models, we developed novel evaluation metrics and benchmarks inspired by identifiability theory, providing a systematic framework for measuring dictionary quality and disentanglement. Our findings suggest that archetypal constraints not only stabilize SAEs but also improve the structure and plausibility of learned representations.
Beyond vision, our approach lays the groundwork for more reliable concept discovery in broader modalities, including large language models and other structured data domains.

\vspace{-2mm}
\section*{Impact Statement}
\vspace{-2mm}
This work confronts a critical obstacle in sparse interpretability methods: instability. Archetypal SAEs enforce geometric constraints that yield dictionaries more consistent across runs, more grounded in the data manifold, and better aligned with true classification and generative directions. These advances enhance both the reliability and scientific utility of sparse autoencoders, enabling stable concept discovery at scale in large vision models. Our benchmarks provide a principled framework for evaluating concept plausibility and identifiability, supporting reproducible and trustworthy progress in interpretability research.

\vspace{-3mm}
\section*{Acknowledgments}
\vspace{-2mm}
This work has been made possible in part by a gift from the Chan Zuckerberg Initiative Foundation to establish the Kempner Institute for the Study of Natural and Artificial Intelligence at Harvard University. MW acknowledges support from a Superalignment Fast Grant from OpenAI, Effective Ventures Foundation, Effektiv Spenden Schweiz, and the Open Philanthropy Project.

\newpage
\bibliography{xai,dictionary_learning,aa}
\bibliographystyle{icml2025}
\clearpage

\appendix
\onecolumn

\section*{Appendix}

\section{Qualitative Examples}
\label{ap:qualitative}

\begin{figure*}[h]
    \centering
    \includegraphics[width=0.9\textwidth]{./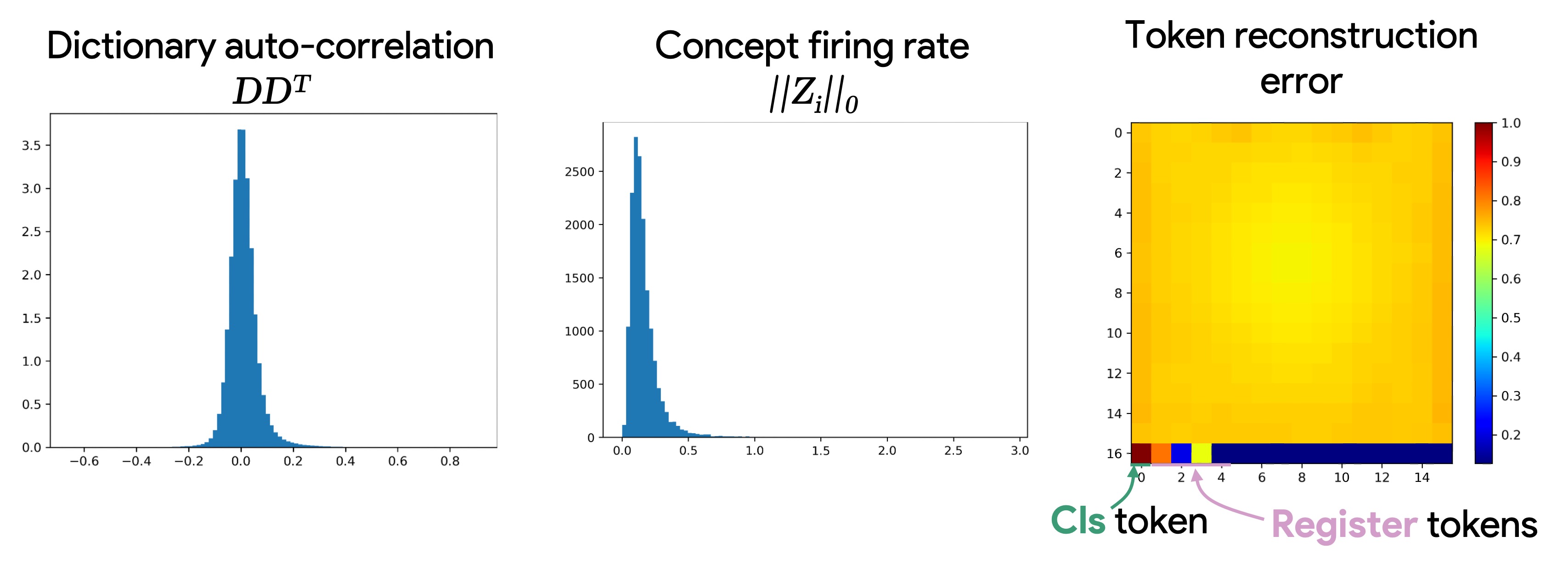}
    \caption{\textbf{Performance, Auto-Correlation, and Firing Rate of DinoV2 for Archetypal SAE.}
    The Archetypal SAE analyzed here demonstrates a well-structured dictionary with low auto-correlation, indicating that the dictionary atoms are not collinear. The firing rate exhibits a long-tail distribution, and reconstruction error is uniformly distributed across tokens, except for the \textsc{CLS} token, which shows the highest reconstruction error.}
    \label{fig:detail_rasae_dino}
\end{figure*}

\clearpage

\section{Extended Related Work on Archetypal Analysis}

Archetypal analysis (AA) was introduced by Cutler and Breiman in 1994 as a method to represent observations as convex combinations of extremal ``pure types'' called archetypes \cite{cutler1994archetypal}. Each archetype lies on the boundary of the data’s convex hull and is itself constrained to be a convex combination of data points, yielding an interpretable factorization where data are explained in terms of extreme exemplars. This approach was proposed as an alternative to principal component analysis for uncovering latent structure, providing data-like representative factors instead of orthogonal directions \cite{cutler1997moving}.
Shortly after its introduction, AA was extended to spatio-temporal settings by identifying archetypes that vary over time \cite{stone1996spatiotemporal} and \cite{prabhakaran2012modelselection} proposed an automatic model selection criterion for AA, alleviating the need to fix $k$ a priori.
After its initial formulation, AA did not gain widespread popularity, partly due to computational limitations and the availability of more established techniques like $k$-means or NMF. Nonetheless, several works in the 2000s applied AA successfully in scientific domains. For instance~\cite{chan2003galaxies} applied AA to astronomical spectra. In genetics, ~\cite{huggins2007genetic} found that archetypes learned from human genotype data corresponded to ancestral population prototypes. AA was also utilized in socio-economic and bibliometric analyses \cite{seiler2013scientists}, and to model player behavior in video games \cite{sifa2013gaming}.
Various enhancements to the basic AA model have been proposed, \cite{eugster2011robust} introduced weighted and robust AA, \cite{seth2016probabilistic} proposed to treats archetypes as latent factors in a probabilistic generative model. In a similar vein, contemporary work has developed AA for binary data specifically \cite{wedenborg2025binary} and ~\cite{epifanio2019missing} tackled the presence of missing values. Likewise~\cite{moliner2019robust} formulated robust AA for multivariate functional data.

Another important direction has been the development of non-linear and kernel variants of AA \cite{morup2012aa,bauckhage2014kernel}. These kernel and multi-layer approaches connect AA with manifold learning techniques. \cite{javadi2019nmf} cast NMF as a special case of AA under separability assumptions.
In term of algorithmic and scalability improvements ~\cite{chen2014fast} proposed a fast active-set algorithm, implemented in the SPAMS toolbox, making AA tractable for large-scale computer vision tasks.~\cite{bauckhage2015autoencoder} adapted a Frank–Wolfe algorithm for AA, reducing runtime. \cite{mair2019coreset} proposed coreset approximations to speed up archetype discovery, and \cite{abrol2020sparse} developed a greedy AA algorithm that extended naturally to robust and kernelized variants. Finally,~\cite{mei2018online} proposed an online algorithm for streaming data scenarios.

Mairal et al.’s task-driven dictionary learning framework inspired supervised AA models \cite{mairal2011task}. Related topic modeling approaches such as SurvLDA \cite{dawson2012survllda} and anchor-based models \cite{ding2016anchor, arora2013topic} similarly identify extremal structure.
More recently, deep learning has been integrated with AA to handle complex, non-linear data. Wynen et al.~\cite{wynen2018styles} applied AA to deep features for archetypal style analysis in artwork. Bauckhage et al.~\cite{bauckhage2015autoencoder} described AA as an autoencoder. Keller et al.~\cite{keller2020deepaa} developed a VAE-based deep AA that accommodates supervision.
Applications are widespread. In vision, AA has been used to discover prototypical images \cite{thurau2009archetypal}, representative behaviors \cite{fotiadou2017temporal}, and fMRI brain patterns \cite{hinrich2016fmri}. In hyperspectral imaging, recent contributions such as SUnAA \cite{rasti2023grsl} and entropic descent AA \cite{zouaoui2023tip} set new benchmarks. In NLP, AA was used in multi-document summarization \cite{canhasi2015summarization} and anchor-based topic modeling \cite{ding2016anchor, arora2013topic}.

\paragraph{Comparison to Our Work}
Our method departs from AA by constraining \textbf{only the decoder} to the convex hull of data, while using a linear encoder, maintaining full compatibility with modern SAEs architectures like TopK and JumpReLU. Unlike AA, which jointly optimizes over convex atoms and codes, we retain end to end SAE training. Our goal is not AA approximation or generation, but improved stability and semantic consistency of SAEs. We further introduce a scalable relaxation of the convex constraint, adapted to large-scale training and distinct from prior AA relaxations (e.g., \cite{morup2012aa}).

\clearpage

\twocolumn[]
\section{Formal Definitions of Metrics}
\label{ap:metrics}

To comprehensively evaluate SAEs and their archetypal variants, we have defined in Sec.\ref{sec:metrics} a set of metrics that assess four key dimensions of the SAEs: (\textbf{\textit{i}}) sparse reconstruction, (\textbf{\textit{ii}}) consistency, (\textbf{\textit{iii}}) structure in the dictionary ($\D$), and (\textbf{\textit{iv}}) structure in the codes ($\Z$).

\subsection{Sparse Reconstruction}
As explained in the main paper, these metrics evaluate the ability of the model to accurately reconstruct activations while enforcing sparsity constraints. We believe they are well understood and already used by the interpretability community. The \textbf{Reconstruction Error ($R^2$)}  measures the fidelity of the reconstruction by quantifying how well the learned dictionary approximates the input activations:
\begin{equation}
R^2 = 1 - \frac{||\A - \hat{\A}||_F^2}{||\A - \bar{\A}||_F^2}, %
\end{equation}
with $\hat{\A} = \Z\D$ the predicted activation and $\bar{\A}$ the mean activation matrix. Essentially, $R^2$ measures how much we improve on explaining variance upon the best possible predictor that uses only a single bias.
The \textbf{Dead Codes} measure the fraction of dictionary atoms that remain unused across the dataset, highlighting inefficiencies in the learned representation, for a set of $n$ codes and $k$ concepts $\Z \in \R^{n \times k}$:
\begin{equation}
\text{Dead Codes} = 1-\frac{1}{k} ||\sum_i^n \Z_i||_0
\end{equation}

\subsection{Consistency}
The second category of metrics assesses how consistent and well-grounded the learned dictionary is. Specifically, we evaluate (\textbf{i}) the \textbf{Stability} of the learned solution across training runs and (\textbf{ii}) its proximity to real data, measured via the \textbf{Out-of-Distribution (OOD) Score}.

\textbf{Stability}, introduced in Eq.~\ref{eq:stability}, quantifies how consistent the learned dictionary remains when training is repeated with different initializations. Given two independently trained dictionaries, $\D$ and $\D'$, stability is defined as:
\begin{equation}
\text{Stability}(\D, \D') = \max_{\bm{\Pi} \in \mathcal{P}(n)} \frac{1}{n} \text{Tr}(\D^{\tr} \bm{\Pi} \D'),
\end{equation}
where $\mathcal{P}(n)$ is the set of $n \times n$ signed permutation matrices. A score of $1$ indicates perfect alignment—each concept in $\D$ has a direct equivalent in $\D'$, while a score of $0$ implies that all concepts are seed-specific and change arbitrarily across runs.

A looser measure of stability is the \textbf{Max Cosine Similarity}, which only considers the best-matching concept between two training runs:
\begin{equation}
\text{Max Cosine} = \max_{i,j} \langle \D_i, \D_j' \rangle.
\end{equation}
This metric provides an upper bound on alignment but does not enforce global consistency across the dictionary.

Beyond stability, we assess how well the learned dictionary aligns with real data. The \textbf{Out-of-Distribution (OOD) Score} measures the deviation of dictionary atoms from real data points by computing the cosine similarity between each dictionary atom $\D_i$ and its closest real activation $\A_j$:
\begin{equation}
\text{OOD Score} = 1 - \frac{1}{k} \sum_{i=1}^{k} \max_{j \in [n]} \langle \D_i, \A_j \rangle.
\end{equation}
A score of $0$ indicates that every dictionary atom $\D_i$ exactly matches an existing data point $\A_j$, meaning the model purely reconstructs real activations. Notably, methods like Separable-NMF~\cite{gillis2020nonnegative}, which enforce the ``pure-pixel'' assumption by explicitly selecting dictionary atoms from the dataset, naturally achieve an OOD score of $0$.

Together, these metrics provide a comprehensive evaluation of how stable, interpretable, and grounded the learned dictionary remains across training runs and relative to real data.

\subsection{Structure in the Dictionary ($\D$)}
The third category assesses the internal organization of the dictionary, providing insights into its effective dimensionality, redundancy, and (we hope) overall interpretability. Unlike reconstruction or consistency metrics, which evaluate external properties of the learned dictionary, these metrics focus on how well-formed and structured the set of concepts is. Again, a dictionary with lower effective dimensionality suggests structure like compositionality and/or hierarchical concepts. \textbf{Stable Rank} is the first metric we propose that provides an effective measure of the intrinsic dimensionality of the dictionary:
\begin{equation}
\text{Stable Rank} = \frac{||\D||_F^2}{||\D||_2^2}.
\end{equation}
Unlike the traditional matrix rank, which is sensitive to numerical precision (all dictionaries are nearly full rank in practice because of numerical error), the stable rank remains well-behaved even in high-dimensional settings, serving as a smooth proxy for rank estimation. \textbf{Effective Rank} offers an alternative perspective by measuring the entropy of the singular value distribution of $\D$:
\begin{equation}
\text{Eff. Rank} = \exp\left(- \sum_{i=1}^{k} \sigma_i \log \sigma_i \right),
\end{equation}
where $\sigma_i$ are the \textbf{normalized} singular values of $\D$ (i.e., $\sum_i \sigma_i = 1$). An effective rank close to $k$ suggests that all dictionary atoms are equally important, whereas a low effective rank implies that only a few dominant concepts capture most of the variation in the data. A fully orthogonal dictionary (thus not overcomplete) would achieve an effective rank of $k$. \textbf{Coherence} is a common notion in dictionary learning and compress sensing, it quantifies redundancy between dictionary atoms by measuring the maximum pairwise cosine similarity:
\begin{equation}
\text{Coherence} = \max_{i \neq j} |\D_i^\tr \D_j|.
\end{equation}
We still admit that each $\D_i$ is on the $\ell_2$ ball. Lower coherence indicates that dictionary atoms are more diverse and span independent directions, which is desirable for disentangled representations. Conversely, high coherence suggests that multiple dictionary atoms encode nearly identical features, reducing the efficiency of the learned basis. Notably, coherence is closely related to the concept of \textit{mutual incoherence} in compressed sensing~\cite{donoho2006compressed}, where low-coherence bases are preferred for sparse signal recovery.

\subsection{(\textbf{\textit{iv}}) Structure in the Codes ($\Z$)}
The last category measures how much structure we have in the codes. While dictionary structure (\textbf{\textit{iii}}) focuses on the learned basis $\D$, the structure of the codes $\Z$ determines how these dictionary atoms are used to reconstruct activations. A well-structured encoding should exhibit meaningful combinations of concepts while avoiding redundancy and destructive interference. \textbf{Connectivity} measures the diversity of concept usage by counting the number of \emph{unique co-activations} within the code matrix:
\begin{equation}
\text{Connectivity} = 1 - (\frac{1}{d^2} ||\Z^\tr\Z||_0).
\end{equation}
This metric quantifies how many distinct pairs of concepts $(i,j)$ are activated together across samples. A high connectivity score suggests that a broad range of concepts can be meaningfully combined, leading to more complex representations. Conversely, low connectivity implies a highly structured representation, and in some sense a group-sparsity representation where only a subset of concepts can fire together. We note that connectivity in sparse coding has been linked to compositionality~\cite{olshausen1996emergence}. We believe that none of the SAEs currently studied achieve interesting performance on this metric. Finally, \textbf{Negative Interference} quantifies the extent to which co-activated concepts cancel each other out, reducing their effectiveness:
\begin{equation}
\text{Neg. Inter.} = ||\text{ReLU}(-(\Z^\tr \Z) \odot (\D \D^\tr))||_2.
\end{equation}
Where $\odot$ is the Hadamard product, this metric captures cases where two concepts $i$ and $j$ frequently co-activate (as measured by $\Z^\tr \Z$), yet their dictionary atoms are negatively correlated (indicated by a negative dot product in $\D \D^\tr$). The ReLU function ensures that only destructive interactions are counted, where activation of both concepts leads to mutual cancellation rather than constructive combination.  A high negative interference score suggests that the learned dictionary contains redundant or antagonistic concepts. In extreme cases, we could imagine a strong negative interference can lead to concept pairs that consistently suppress each other to comply with some sparsity constraint.

\clearpage

\section{Distilling $\A$ into $\C$}
\label{ap:distillation}

\begin{figure}
    \centering
    \includegraphics[width=0.8\linewidth]{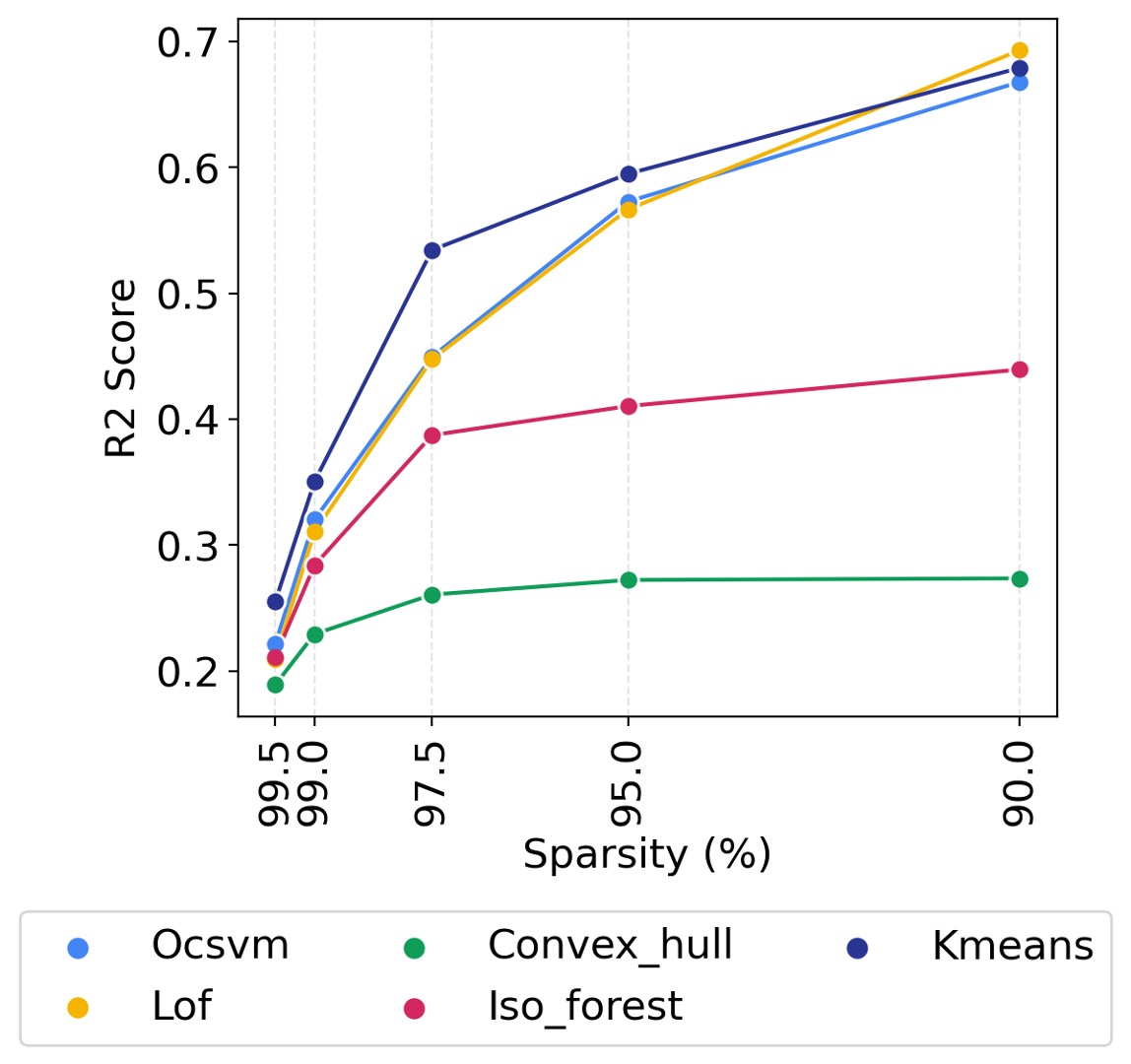}
    \caption{\textbf{Comparison of Distillation Methods.} To scale up the Archetypal SAE, it is impractical to utilize the entire data matrix $\mathbf{A}$ for identifying archetypes. Instead, we first reduce the dataset to a smaller subset of points, denoted as $\mathbf{C}$, and construct the archetypes/dictionary elements from this reduced set. Among the distillation methods evaluated, K-Means proves to be the most effective approach, generating points within the convex hull of the data and achieving strong performance scores. These experiments were conducted on DINOv2 using an Archetypal SAE without relaxation.}
    \label{fig:distillation}
\end{figure}

As a recall, Archetypal SAEs construct dictionary atoms as convex combinations of data points,
\begin{equation}
\D = \W \A, \quad \text{with} \quad \W \in \Omega_{k, n}, \quad \A \in \R^{n \times d},
\end{equation}
requiring access to the full data matrix $\A$. However, the original problem is intractable, as the number of points $n$ requires storing and processing millions of activations at each gradient step, which is computationally prohibitive, particularly for large-scale datasets of tokens. To address this, we proposed in Sec.~\ref{sec:archetypal} a \textit{distillation} step, reducing $\A$ to a compact subset $\C$. The dictionary $\D$ is then formed using only $\C \in \R^{n' \times d}$, with $n' \ll n$, ensuring a tractable optimization while remaining representative of the original distribution.

We investigated five different methods to distill $\A$ into $\C$, as illustrated in Fig.~\ref{fig:distillation}:
\begin{itemize}
    \item \textbf{K-Means}: Groups data into $m$ clusters and selects centroids as representatives. This ensures that $\C$ remains within the convex hull of $\A$ while capturing its most frequent patterns.
    \item \textbf{Local Outlier Factor (LOF)}~\cite{breunig2000lof}: Identifies statistically atypical points compared to their neighbors, helping remove rare or extreme cases.
    \item \textbf{Isolation Forest (Iso)}~\cite{liu2008isolation}: Uses recursive partitioning to isolate anomalies, providing an efficient method for detecting outliers.
    \item \textbf{Convex Hull on Reduced Dimensions}: Computes the convex hull of $\A$ after projecting it onto its 10 principal components via PCA. This ensures that extreme points defining the overall shape of the distribution are retained while reducing computational complexity.
    \item \textbf{One-Class SVM (OC-SVM)}~\cite{scholkopf1999support}: A support vector method that learns a boundary around high-density regions, effectively isolating representative points while filtering outliers.
\end{itemize}

Among these, \textbf{K-Means emerges as the most effective method}. Furthermore, while convex hull approaches theoretically guarantee coverage of extreme points, their computational cost scales poorly with high dimensions, making them impractical for large datasets.

\section{Implementation Details}
\label{ap:implementation}

We provide an efficient PyTorch implementation of the relaxed dictionary learning module used in Archetypal SAEs. Given a distilled set of centroids $\C \in \mathbb{R}^{n' \times d}$, our goal is to construct a dictionary $\D$ as a convex combination of elements in $\C$, while allowing a controlled degree of relaxation via additive perturbations.

\begin{itemize}[label=-, topsep=0pt, partopsep=0pt, itemsep=0pt, parsep=0pt]
\item The weight matrix $\W \in \mathbb{R}^{k \times n'}$ is constrained to the probability simplex, ensuring convex combinations of centroids. This is enforced via a \texttt{convex\_param} function that projects $\W$ onto the simplex using ReLU and row-wise normalization.
\item The relaxation term $\bm{\delta} \in \mathbb{R}^{k \times d}$ allows mild deviations from strict convexity. To prevent excessive drift, $\bm{\delta}$ is regularized by adaptively scaling its norm to remain within a relaxation factor.
\item Dictionary atoms are computed as $\D = \W \C + \bm{\delta}$.
\item The model supports gradient updates for $\W$ and $\bm{\delta}$ while keeping $\C$ fixed.
\end{itemize}

\begin{figure*}[h]
\centering
\noindent\begin{minipage}{1.0\textwidth}
\begin{RoundedListing}
class ArchetypalDictionary(nn.Module):
  """Relaxed Archetypal SAE (RA-SAE) dictionary.

  Constructs a dictionary where each atom is a convex combination of data
  points from C, with a small relaxation term $\Lambda$ constrained by $\delta$.
  """

  def __init__(self, C, k, delta=1.0):
    """
    Parameters
    ----------
    C : Tensor
        Candidate archetypes of shape (n', d).
    k : int
        Number of dictionary atoms.
    delta : float
        Upper bound on the norm of the relaxation term $\Lambda$.
    """
    super().__init__()
    n_prime, d = C.shape
    self.register_buffer("C", C)  # store C as a fixed buffer (non-trainable)
    self.W = nn.Parameter(torch.eye(k, n_prime))  # trainable param (row-stochastic)
    self.Lambda = nn.Parameter(torch.zeros(k, d))  # small relaxation term
    self.delta = delta  # constraint on the relaxation term norm

  def forward(self, Z):
    """
    Parameters
    ----------
    Z : Tensor
        Sparse codes of shape (n, k).

    Returns
    -------
    Tensor
        Reconstructed activations of shape (n, d).
    """
    with torch.no_grad():
      # ensure W remains row-stochastic (positive and row sum to one)
      W = torch.relu(self.W)
      W /= W.sum(dim=-1, keepdim=True)
      self.W.data = W

      # enforce the norm constraint on $\Lambda$ to limit deviation from conv(C)
      norm_Lambda = self.Lambda.norm(dim=-1, keepdim=True)  # norm per row
      scaling_factor = torch.clamp(self.delta / norm_Lambda, max=1)  # safe scaling factor
      self.Lambda *= scaling_factor  # scale $\Lambda$ to satisfy $||\Lambda|| \leq \delta$

    # compute the dictionary as a convex combination plus relaxation
    D = self.W @ self.C + self.Lambda

    return Z @ D
\end{RoundedListing}
\end{minipage}
\caption{\textbf{Detailed Pytorch code for Relaxexd Archetypal SAE (\rasae).}}
\label{code:archetypal_details}
\end{figure*}
\clearpage

\section{Theoretical properties of Archetypal Dictionary}

In this section, we provide theoretical insights into Archetypal Sparse Autoencoders (A-SAEs) by addressing three key aspects: (\textbf{\textit{i}}) why standard SAEs can produce dictionaries that drift away from the data manifold, (\textbf{\textit{ii}}) a geometric interpretation of the A-SAE solution and the conditions under which distillation is optimal, and (\textbf{\textit{iii}}) bounds on the stability, rank, and out-of-distribution (OOD) score of A-SAEs.

A simple explanation for why standard SAEs may drift away from the data manifold can be found by examining the gradient descent (GD) update rule for the dictionary $\D$. Given a dataset $\A \in \R^{n \times d}$, nonnegative codes $\Z \geq 0 \in \R^{n \times k}$, and a dictionary $\D \in \R^{k \times d}$ with unit-norm rows ($\|\D_j\|_2 = 1$), the standard SAE optimization problem is:
\begin{equation}
\min_{\Z,\D} \|\A - \Z \D^\tr\|_F^2 \quad \text{s.t.} \quad \Z \geq 0, \quad \|\D_j\|_2 = 1.
\end{equation}
The gradient of the reconstruction loss with respect to $\D$ is given by:
\begin{equation}
\nabla_{\D} \|\A - \Z \D^\tr\|_F^2 = 2(\Z^\tr \Z \D - \Z^\tr \A).
\end{equation}
This gradient can be decomposed into two components:
\begin{align*}
\Delta \D &= \underbrace{\Z^\tr \A}_{\text{data-anchored term}} - \underbrace{\Z^\tr \Z \D}_{\text{out-of-data term}}.
\end{align*}
The first term, $\Z^\tr \A$, pulls dictionary atoms toward a conic combination of data points, anchoring them to $\mathrm{cone}(\A)$. However, the second term, $\Z^\tr \Z \D$, introduces a drift effect that pushes the dictionary away from the data, as it depends on the correlations within $\Z$ and the original seed $\D$.

Empirically, in high-dimensional settings where the codes $\Z$ exhibit sufficient variability could induce the second term to dominate. This could explain why classical SAEs have a relatively low OOD score, as shown in Sec.~\ref{table:sae_metrics} and why minor perturbations in initialization or the training set can potentially yield different dictionaries, as the second term is entirely dependent of the seed.

\subsection{Geometric interpretation of A-SAE}

\begin{proposition}[Archetypal Dictionary, Convex and Conic Hulls]
\label{prop:archetypal-convex-conic}
Given $\A \in \R^{n \times d}$ as a set of $n$ data points and $\W \in \Omega_{k,n}$ as any row-stochastic matrix, parameterizing $\D = \W \A$ ensures that each concept $\D_i$ lies within the convex hull of the data, i.e., $\D_i \in \mathrm{conv}(\A)$ for all $i \in [k]$. Moreover, for any nonnegative codes $\Z \geq 0$, the reconstruction $\Z \D$ lies within the conic hull of the data, i.e., $\Z \D \subseteq \mathrm{cone}(\A)$. More generally, let $\C \in \R^{n' \times d}$ such that $\mathrm{conv}(\C) \subseteq \mathrm{conv}(\A)$ be a subset of $\A$ or any set of points within $\mathrm{conv}(\A)$. Then $\D' = \W \C$ satisfies $\D'_i \in \mathrm{conv}(\C) \subseteq \mathrm{conv}(\A)$ and $\Z \D' \subseteq \mathrm{cone}(\A)$. Finally, if $\C$ includes the extreme points of $\A$, then no representational power is lost.
\end{proposition}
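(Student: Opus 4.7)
The plan is to prove the three assertions of Proposition~\ref{prop:archetypal-convex-conic} essentially by unpacking the definitions, since the hypotheses on $\W$ and $\C$ are designed to make convex/conic containment immediate. First I would handle the baseline claim: writing $\D_i = \sum_{j=1}^n W_{ij} \A_j$ and using $W_{ij}\geq 0$ together with $\sum_j W_{ij}=1$, this is by definition a convex combination of the rows of $\A$, so $\D_i\in\mathrm{conv}(\A)$. For the second claim I would re-associate $\Z\D = (\Z\W)\A$, note that $\Z\W$ is a product of nonnegative matrices and hence nonnegative, and conclude that each row of $\Z\D$ is a conic combination of rows of $\A$, i.e.\ $\Z\D\subseteq\mathrm{cone}(\A)$.

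Next I would extend both statements to a general $\C$ with $\mathrm{conv}(\C)\subseteq\mathrm{conv}(\A)$. Writing $\D'=\W\C$, the same convex-combination argument gives $\D'_i\in\mathrm{conv}(\C)$, and the hypothesis supplies $\mathrm{conv}(\C)\subseteq\mathrm{conv}(\A)$. For the conic statement, the key observation is that $\mathrm{conv}(\A)\subseteq\mathrm{cone}(\A)$ (a convex combination is a conic one with coefficients summing to one), so every row of $\C$ lies in $\mathrm{cone}(\A)$; since $\mathrm{cone}(\A)$ is closed under nonnegative linear combinations, any $\Z\D' = (\Z\W)\C$ again lies in $\mathrm{cone}(\A)$.

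The only part requiring a nontrivial ingredient is the ``no representational power is lost'' clause, which I would formalize as the equality of achievable reconstructions $\{\Z\D' : \Z\geq 0,\, \W\in\Omega_{k,n'}\} = \{\Z\D : \Z\geq 0,\, \W\in\Omega_{k,n}\}$, both equal to $\mathrm{cone}(\C)=\mathrm{cone}(\A)$ (up to the sparsity budget on $\Z$). The inclusion $\mathrm{cone}(\C)\subseteq\mathrm{cone}(\A)$ is already established. For the reverse, I would invoke the Minkowski theorem for polytopes \citep{dubins1962extreme}: since $\mathrm{conv}(\A)$ is a polytope, each data point $\A_j$ is a convex combination of the extreme points of $\mathrm{conv}(\A)$, all of which belong to $\C$ by hypothesis. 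Hence $\A_j\in\mathrm{conv}(\C)\subseteq\mathrm{cone}(\C)$, giving $\mathrm{cone}(\A)\subseteq\mathrm{cone}(\C)$.

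The main obstacle is more conceptual than technical: one must pin down what ``representational power'' means, since the dictionary $\D'\in\R^{k\times d}$ has the same shape as $\D$ but is parameterized using only $n'\ll n$ columns of $\C$. The cone-equality argument above settles this at the level of the set of achievable reconstructions. If one wants a stronger code-level statement, namely that every $(\Z,\W)$ realizable with $\A$ is realizable with $\C$, it follows by composing row-stochastic matrices: expressing each $\A_j$ as a convex combination of rows of $\C$ yields a row-stochastic $\M\in\Omega_{n,n'}$ with $\A=\M\C$, and then $\W\A=(\W\M)\C$ with $\W\M\in\Omega_{k,n'}$, since $\Omega$ is closed under composition.
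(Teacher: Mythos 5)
Your proof is correct and follows essentially the same route as the paper's: unpacking the row-stochastic constraint to obtain convex membership of each $\D_i$, re-associating $\Z\D=(\Z\W)\A$ for the conic claim, passing the same reasoning through $\C$, and invoking the extreme-point reconstruction theorem for the final clause (you cite the finite-dimensional Minkowski version where the paper cites Krein--Milman, but both apply here since $\mathrm{conv}(\A)$ is a polytope). Your extra composition argument $\W\A = (\W\M)\C$ with $\W\M \in \Omega_{k,n'}$ is a welcome refinement that makes the ``no loss of expressivity'' claim more precise than the paper's somewhat terse statement that $\mathrm{cone}(\Z\W\C)=\mathrm{cone}(\A)$.
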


\begin{proof}
Since $\W$ is row-stochastic, each $\D_i = \W_i \A$ is a convex combination of the rows of $\A$, i.e., $\D_i \in \mathrm{conv}(\A)$. Furthermore, for nonnegative $\Z$, we have $\Z \D = (\Z \W) \A$, with $\Z \W \geq 0$, which implies that $\Z \D \subseteq \mathrm{cone}(\A)$. With $\D' = \W \C$ for $\mathrm{conv}(\C) \subseteq \mathrm{conv}(\A)$, each row $\D'_i$ lies within $\mathrm{conv}(\C) \subseteq \mathrm{conv}(\A)$. Lastly, if $\C$ contains the extreme points~\cite{boyd2004convex} of $\A$, then by simple application of the Krein-Milman theorem \cite{lax2014functional}, $\mathrm{conv}(\C) = \mathrm{conv}(\A)$ and $\mathrm{cone}(\Z \W \C) = \mathrm{cone}(\A)$, ensuring no loss in expressivity.
\end{proof}

The constraints imposed by \asae~on $\D$ lead to a straightforward yet crucial geometric property: each dictionary atom $\D_i$ remains within the convex hull of the data.
This result, while simple, has interesting implications for some of the metrics we study, notably stability and OOD.
In fact, we will now see that this implies a bounded OOD score, prevents the rank of the dictionary (and thus its structure) to higher than the rank of the data, and induces some loose form of algorithmic stability.

\subsection{Stability of Archetypal Dictionary}

\begin{proposition}[Geometric Stability of Archetypal Dictionaries]
\label{prop:stability}
Let $\A,\A' \in \mathbb{R}^{n \times d}$ be two data matrices such that
$||\A - \A'||_{F} \le \varepsilon$.
Suppose $\W,\W' \in \Omega_{k,n}$ are row-stochastic matrices (i.e.\ each row
belongs to the probability simplex).
Define the archetypal dictionaries $\D = \W \A$ and $\D' = \W' \A'$.
Then,
\begin{align*}
||\D - \D'||_{F} \le \sqrt{k}\, \varepsilon + 2\sqrt{k}\, \min\left(||\A||_{F}, ||\A'||_{F}\right).
\end{align*}
\end{proposition}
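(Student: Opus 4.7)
The plan is to prove this bound by a straightforward triangle-inequality decomposition, relying on the fact that row-stochastic matrices have a Frobenius norm bounded by $\sqrt{k}$.

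First, I would insert a cross term to split the difference $\D - \D'$ in two ways, namely
\begin{equation*}
\D - \D' \;=\; \W(\A - \A') + (\W - \W')\A' \;=\; (\W - \W')\A + \W'(\A - \A').
\end{equation*}
By the triangle inequality applied to either decomposition, controlling $||\D - \D'||_F$ reduces to bounding the Frobenius norms of $\W(\A-\A')$ (or $\W'(\A-\A')$) and of $(\W-\W')\A'$ (or $(\W-\W')\A$).

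Next, I would exploit the row-stochasticity constraint $\W \in \Omega_{k,n}$. Each row of $\W$ lies in $\Delta^n$, so $||\W_i||_1 = 1$, hence $||\W_i||_2 \leq 1$, which gives $||\W||_F \leq \sqrt{k}$, and identically for $\W'$. Then using the sub-multiplicative property $||\X \Y||_F \leq ||\X||_F \, ||\Y||_F$ (really, $||\X \Y||_F \leq ||\X||_{\mathrm{op}} ||\Y||_F \leq ||\X||_F ||\Y||_F$), the first decomposition gives
\begin{equation*}
||\D - \D'||_F \;\leq\; ||\W||_F \, ||\A - \A'||_F + ||\W - \W'||_F \, ||\A'||_F \;\leq\; \sqrt{k}\,\varepsilon + 2\sqrt{k}\,||\A'||_F,
\end{equation*}
where the bound $||\W - \W'||_F \leq ||\W||_F + ||\W'||_F \leq 2\sqrt{k}$ is applied. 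The second decomposition yields the same bound with $||\A||_F$ in place of $||\A'||_F$, and taking the minimum over the two gives the claim.

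The proof is essentially a chain of norm inequalities, so there is no real obstacle; the only subtlety worth being explicit about is the reduction $||\W||_{\mathrm{op}} \leq ||\W||_F \leq \sqrt{k}$, which stems from the $\ell_1$--$\ell_2$ norm comparison on each row of a row-stochastic matrix. It is worth noting that the bound is meaningful primarily in the small-$\varepsilon$ regime relative to $\sqrt{k}\,||\A||_F$; the second term is a worst-case cushion accounting for the fact that $\W$ and $\W'$ can differ arbitrarily within $\Omega_{k,n}$, and tightening it would require additional assumptions on how the optimization selects $\W, \W'$ (e.g., a uniqueness or Lipschitz condition on the archetypal fit), which is beyond the geometric statement at hand.
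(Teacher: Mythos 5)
Your proof is correct and follows essentially the same route as the paper's: the same intermediate term $\W\A'$ is inserted, the triangle inequality splits $\D - \D'$ into $\W(\A - \A')$ and $(\W - \W')\A'$, row-stochasticity gives $\|\W\|_F \le \sqrt{k}$ hence $\|\W - \W'\|_F \le 2\sqrt{k}$, and the second decomposition plus a minimum finishes the argument. Your aside spelling out $\|\W\|_{\mathrm{op}} \le \|\W\|_F \le \sqrt{k}$ via the $\ell_1$--$\ell_2$ row comparison is a small clarification of what the paper writes as $\|\W\|_2 \le \sqrt{k}$, but the substance is identical.
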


\begin{proof}[Proof]
Using triangle inequality, we have
\begin{align*}
||\D - \D'||_F = ||\W\A - \W'\A'||_F \le &||\W\A - \W\A'||_F ~+~ \\ &||\W\A' - \W'\A'||_F.
\end{align*}
We bound each term separately, since $\D$ and $\D'$ are archetypal dictionaries, $\W$ is row-stochastic, hence $||\W||_{2} \le \sqrt{k}$. Therefore,
\begin{align*}
||\W\A - \W\A'||_F = ||\W(\A - \A')||_F& \le \\
&||\W||_{2}\, ||\A - \A'||_{F},
\end{align*}
and $||\W||_{2}\,||\A - \A'||_{F} \leq \sqrt{k}\, \varepsilon$.
 For the second term,
\begin{align*}
||\W\A' - \W'\A'||_F = ||(\W - \W')\,\A'&||_F \le \\
&||\W - \W'||_F\, ||\A'||_{F},
\end{align*}
with $||\W - \W'||_F \leq 2\sqrt{k}$ summing these yield:
$$
||\D - \D'||_F \le \sqrt{k}\, \varepsilon +  2\sqrt{k}\, ||\A'||_{F}.
$$
It is straightforward to repeat the above process with a $||\A'||_{F}$ factor in the right hand side. Taking the minimum over the two factors then completes the proof.
\end{proof}

The key observation is that a row-stochastic matrix $\W$ cannot
stretch data arbitrarily, imposing a natural control on how $\D = \W\A$
changes if $\A$ is slightly perturbed.
By contrast, in the unconstrained setting, where the dictionary $\D$ is free to move anywhere (subject only to norms or regularization), there is no comparably simple bound ensuring stability. Even small perturbations in $\A$ (or in the random seed, initialization, etc.) can shift the solution significantly: as there is no requirement that $\D$ stay close to \(\mathrm{conv}(\A)\), the learned atoms can drift to entirely different regions of the space.

\subsection{Controlled Rank and Stability in A-SAE Dictionaries}

We now show that the rank of the dictionary obtained using the Archetypal parametrization are inherently controlled. Specifically, the rank of the dictionary cannot exceed the rank of the data. This property is interesting as it prevents the dictionary from becoming arbitrarily complex, promoting solutions that are more structured and aligned with the data. Consequently, the dictionary is more likely to uncover meta-concepts or low-rank representations.

\begin{proposition}[Rank Bound of Archetypal Dictionaries]
\label{prop:rank}
Let $\A \in \mathbb{R}^{n \times d}$ be a data matrix with $\mathrm{rank}(\A) = r \leq d$, assuming $n \gg d$ and $k \gg d$. Let $\W \in \Omega_{k,n}$ be a row-stochastic matrix, and define the dictionary as $\D = \W \A$.
Then, the rank of $\D$ is bounded by the rank of the data:
\[
\mathrm{rank}(\D) \leq \min(\mathrm{rank}(\A), \mathrm{rank}(\W)) \leq d.
\]
\end{proposition}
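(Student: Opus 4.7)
The plan is to reduce the statement to the standard rank inequality for matrix products, since the row-stochasticity of $\W$ plays no essential role here beyond guaranteeing $\W$ is a well-defined $k \times n$ matrix. First I would invoke the classical fact that for any two matrices $\mat{M}_1 \in \R^{k \times n}$ and $\mat{M}_2 \in \R^{n \times d}$, one has $\mathrm{rank}(\mat{M}_1 \mat{M}_2) \leq \min(\mathrm{rank}(\mat{M}_1), \mathrm{rank}(\mat{M}_2))$, which follows because the column space of $\mat{M}_1 \mat{M}_2$ is a subspace of the column space of $\mat{M}_1$ while the row space of $\mat{M}_1 \mat{M}_2$ is a subspace of the row space of $\mat{M}_2$. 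Applying this with $\mat{M}_1 = \W$ and $\mat{M}_2 = \A$ gives directly $\mathrm{rank}(\D) \leq \min(\mathrm{rank}(\W), \mathrm{rank}(\A))$.

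Next I would conclude the second inequality by noting that $\A \in \R^{n \times d}$ trivially satisfies $\mathrm{rank}(\A) \leq \min(n, d) = d$ under the assumption $n \gg d$, so that $\min(\mathrm{rank}(\W), \mathrm{rank}(\A)) \leq \mathrm{rank}(\A) \leq d$. Chaining these two inequalities yields the claimed bound.

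There is essentially no obstacle here: the row-stochastic constraint $\W \in \Omega_{k,n}$ does not tighten the bound (though it does guarantee $\mathrm{rank}(\W) \geq 1$ since no row is zero), and the assumption $k \gg d$ merely emphasizes that the dictionary is overcomplete, so that the bound $\mathrm{rank}(\D) \leq d$ is genuinely restrictive compared to the unconstrained case where $\mathrm{rank}(\D)$ could reach $\min(k,d) = d$ only incidentally. The conceptual content of the proposition lies not in the proof but in its \emph{interpretation}: by construction, archetypal atoms are forced into an at most $r$-dimensional subspace (the row span of $\A$), which is what anchors them to the data manifold and prevents the dictionary from artificially inflating its effective rank, as observed empirically in Table~\ref{table:sae_metrics}.
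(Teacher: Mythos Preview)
Your proposal is correct and follows essentially the same approach as the paper: both invoke the standard rank inequality for matrix products, $\mathrm{rank}(\W\A) \leq \min(\mathrm{rank}(\W), \mathrm{rank}(\A))$, and then bound by $d$ via $\mathrm{rank}(\A) \leq d$. Your justification via column and row spaces is in fact slightly more precise than the paper's wording, and your remark that the row-stochastic constraint plays no role in the bound is apt.
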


\begin{proof}
Since $\D = \W \A$, the column space of $\D$ is contained in the column space of $\A$, implying
\[
\mathrm{rank}(\D) \leq \mathrm{rank}(\A) = r.
\]
Additionally, since $\W \in \mathbb{R}^{k \times n}$ is row-stochastic, its rank is at most \( \min(k, n) \), giving
\[
\mathrm{rank}(\W) \leq \min(k, n).
\]
it follows that
\[
\mathrm{rank}(\D) = \mathrm{rank}(\W \A) \leq \min(r, \mathrm{rank}(\W)).
\]
\end{proof}

\subsection{Bounding OOD score with Archetypal Constraints}

We now demonstrate that the OOD measure of dictionary atoms obtained under some assumption is inherently lower-bounded. Specifically, the measure is directly tied to the weights in the row-stochastic matrix, with a maximum value of 1 achieved when a dictionary atom perfectly aligns with a data point. This property is particularly interesting as it ensures that dictionary atoms remain well-grounded in the data. Furthermore, the sparsity of the weight matrix plays a crucial role in maintaining orthogonality, thereby preserving the plausibility of the assumption and the robustness of the bounds.

\begin{proposition}[OOD Measure with Non-Interfering Archetypes]
\label{prop:ood}
Let $\A \in \mathbb{R}^{n \times d}$ be our data matrix where each point $\A_i$ is normalized ($\|\A_i\|_2 = 1$ for all $i \in [n]$). Let our archetypal dictionary $\D = \W \A$, where $\W \in \mathbb{R}^{k \times n}$ is a row-stochastic matrix ($\W \in \Omega_{k,n}$). We assume non-interfering Archetypes, meaning two non-orthogonal archetypes cannot be active at the same time (but can exist in the bank of points $\A$). Formally, for each row $\W_i$, the active rows of $\A$ (those $\A_j$ with $\W_{ij} > 0$) are orthogonal, i.e.,
\[
\langle \A_j, \A_{j'} \rangle = 0 \quad \text{for } j \neq j' \text{ and } \W_{ij}, \W_{ij'} > 0.
\]
Then, the out-of-distribution (OOD) measure for each $\D_i$ admits the upper bound:
\[
\mathrm{OOD}(\D_i) \leq 1-\max_{j \in [n]} \W_{ij}.
\]
\end{proposition}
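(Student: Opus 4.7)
The plan is to show the contrapositive bound $\max_{j\in[n]} \langle \D_i, \A_j\rangle \geq \max_{j\in[n]} \W_{ij}$, from which the claimed inequality on $\mathrm{OOD}(\D_i) = 1 - \max_j \langle \D_i, \A_j\rangle$ follows by subtracting from $1$. To this end, I would expand the inner product using the archetypal parameterization: since $\D_i = \sum_{\ell=1}^n \W_{i\ell}\A_\ell$, we have $\langle \D_i, \A_j\rangle = \sum_{\ell=1}^n \W_{i\ell}\langle \A_\ell, \A_j\rangle$ for every $j \in [n]$.

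The next step, which is the critical use of the assumption, is to pick $j$ inside the support $S_i = \{\ell : \W_{i\ell} > 0\}$ of row $\W_i$. The non-interference hypothesis states that any two distinct elements of $S_i$ index orthogonal rows of $\A$, so $\langle \A_\ell, \A_j\rangle = 0$ whenever $\ell \neq j$ and both $\ell, j \in S_i$. Combined with the normalization $\|\A_j\|_2 = 1$, this collapses the sum to a single term: for every $j \in S_i$,
\begin{equation*}
\langle \D_i, \A_j\rangle = \W_{ij}\,\langle \A_j, \A_j\rangle + \sum_{\ell \in S_i,\, \ell \neq j}\W_{i\ell}\cdot 0 = \W_{ij}.
\end{equation*}

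From here the argument closes quickly. Taking the maximum over $j \in S_i$, and noting that indices outside $S_i$ contribute $\W_{ij} = 0$ to $\max_j \W_{ij}$, we obtain
\begin{equation*}
\max_{j \in [n]} \langle \D_i, \A_j\rangle \;\geq\; \max_{j \in S_i} \langle \D_i, \A_j\rangle \;=\; \max_{j \in S_i} \W_{ij} \;=\; \max_{j \in [n]} \W_{ij},
\end{equation*}
and subtracting from $1$ yields the desired bound $\mathrm{OOD}(\D_i) \leq 1 - \max_j \W_{ij}$. The proof is essentially a one-line computation once the orthogonality assumption is exploited; there is no real obstacle beyond being careful that the maximum over $j \in [n]$ can be restricted to the support of $\W_i$ without loss. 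I would also briefly remark on why the assumption is reasonable in practice: sparsity of $\W$ makes it plausible that the handful of active archetypes for a given atom correspond to nearly orthogonal high-dimensional activations, so the clean collapse above holds approximately even when orthogonality is only approximate.
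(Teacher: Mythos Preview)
Your proof is correct and follows essentially the same route as the paper: expand $\langle \D_i, \A_j\rangle$ via the archetypal parameterization, collapse it to $\W_{ij}$ using the non-interference assumption, and take the maximum. The only minor difference is that the paper's proof interprets $\mathrm{OOD}$ via the cosine similarity $\langle \D_i, \A_j\rangle / \|\D_i\|_2$ and therefore also computes $\|\D_i\|_2^2 = \sum_j \W_{ij}^2 \leq 1$ under orthogonality and row-stochasticity; since this normalization can only increase the maximized quantity, your argument carries over unchanged.
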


\begin{proof}
By definition, $\D_i = \sum_{j=1}^n \W_{ij} \A_j$, so
\[
\langle \D_i, \A_j \rangle = \left\langle \sum_{k=1}^n \W_{ik} \A_k, \A_j \right\rangle = \sum_{k=1}^n \W_{ik} \langle \A_k, \A_j \rangle.
\]
Under the orthogonality assumption $\langle \A_k, \A_j \rangle = 0$ for $k \neq j$, only the $k = j$ term remains:
\[
\langle \D_i, \A_j \rangle = \W_{ij} \langle \A_j, \A_j \rangle.
\]
Since $\|\A_j\|_2 = 1$, we have $\langle \A_j, \A_j \rangle = 1$, so:
\[
\langle \D_i, \A_j \rangle = \W_{ij}.
\]

By definition:
\[
\|\D_i\|_2^2 = \left\| \sum_{j=1}^n \W_{ij} \A_j \right\|_2^2.
\]
By the orthogonality of the active rows of $\A$, the contributions of different rows do not interfere, so:
\[
\|\D_i\|_2^2 = \sum_{j=1}^n \W_{ij}^2 \|\A_j\|_2^2.
\]
Since $\|\A_j\|_2 = 1$, this simplifies to:
\[
\|\D_i\|_2^2 = \sum_{j=1}^n \W_{ij}^2.
\]

Substituting these yields our bound:
\[
\mathrm{OOD}(\D_i) = 1 -\max_{j \in [n]} \frac{\langle \D_i, \A_j \rangle}{\|\D_i\|_2} \leq 1- \max_{j \in [n]} \W_{ij}.
\]
\end{proof}

As a notable special case, we observe that $\mathrm{OOD}(\D_i) = 0$ when $\W_{ij} = 1$ for some $j \in [n]$, as the dictionary atom aligns perfectly with a data point. In practice, the sparsity of $\W$ plays a crucial role: it limits interference between (non-orthogonal) components, ensuring that the orthogonality assumption remains plausible and that the derived bounds hold robustly.

\section{Kernel for JumpReLU}

\begin{figure}
    \centering
    \includegraphics[width=0.5\textwidth]{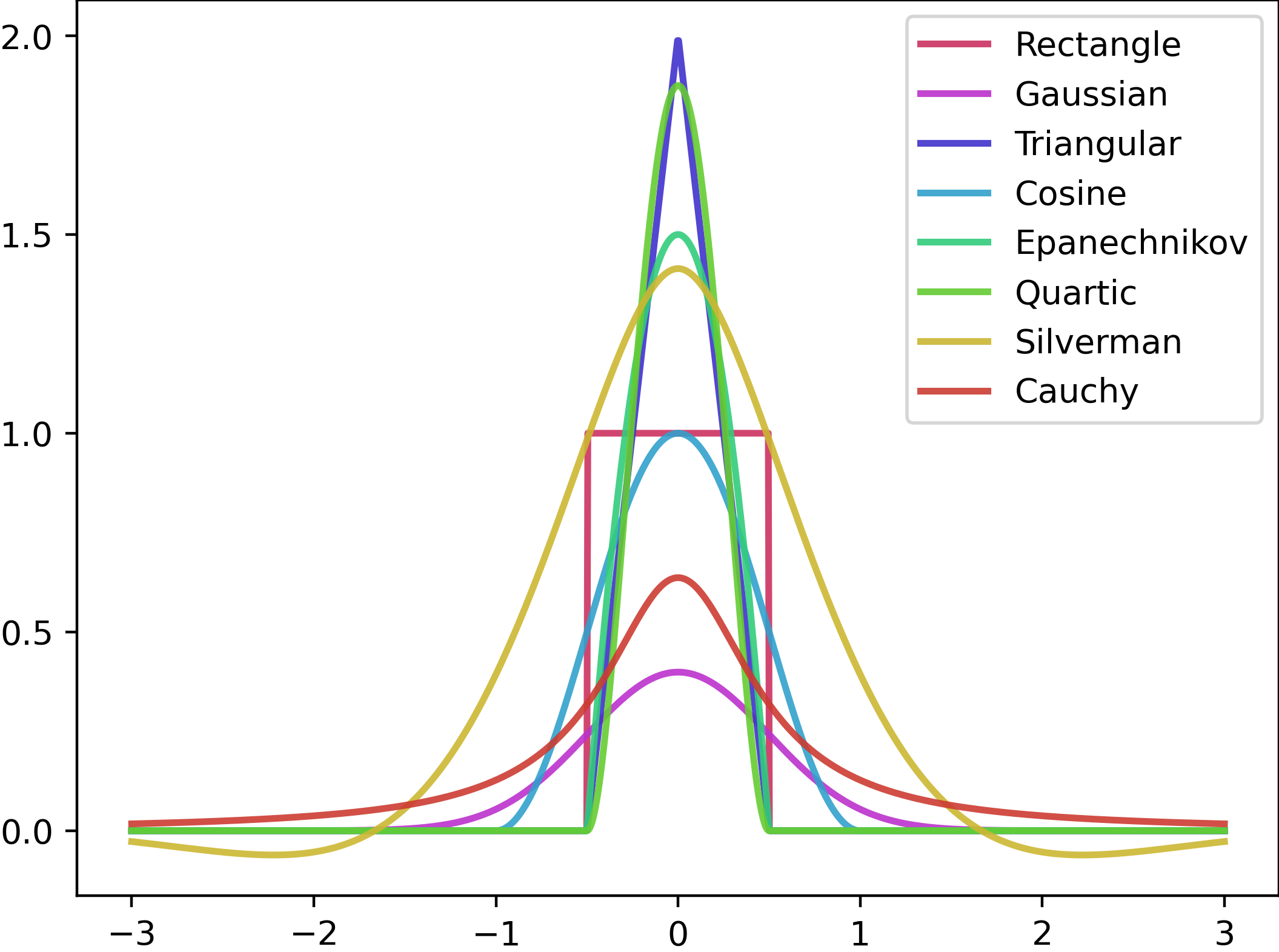}
    \caption{Example of kernel functions for JumpReLU over the interval $[-3, 3]$ with a bandwidth of 1. The Silverman kernel produced more stable and positive results, leading to its selection for all experiments with a smaller bandwidth of $10^{-2}$.}
    \label{fig:jumprelu_kernels}
\end{figure}

JumpReLU~\cite{rajamanoharan2024jumping} is a recently introduced activation mechanism for SAEs designed to optimize $\ell_0$ sparsity by controlling the discontinuities of ReLU through a parameter $\theta$. Its optimization relies on a kernel for density estimation. To assess the effect of kernel choice, we evaluated several options, including Gaussian, Cauchy, and Silverman, on DinoV2. As shown in Figure~\ref{fig:jumprelu_kernels}, the Silverman kernel consistently yielded the most stable and accurate reconstruction results. For our experiments, we selected the Silverman kernel with a bandwidth of $10^{-2}$, although the choice of kernel appears to have only a modest impact on performance.

\section{Soft Identifiability Benchmark}
\label{ap:identifiability}

In this appendix, we provide additional details regarding the experimental setup and evaluation criteria used in the Soft Identifiability Benchmark. We recall that the goal is to assess the ability of SAEs to recover distinct concepts from synthetic image mixtures, where the underlying generative factors are known.

We generate twelve synthetic datasets using Midjourney API\footnote{www.midjourney.com}. For each of these datasets, we programmatically create 4,000 images. These images are constructed by collaging four distinct objects selected from the predefined set, such as different types of gems. Each dataset is generated from between 9 and 20 unique objects, with the dictionary size set exactly to the number of true generative factors (unique object number). An example of some datasets used in the benchmark is shown in Figure~\ref{ap:fig:identifiability}.

Each dataset is split into a training set of 2,000 images and a test set of 2,000 images. The images are processed through a pre-trained vision model, in our case DinoV2, ResNet50, SigLIP and ViT. The resulting pooled activations serve as the input representations for the SAE.

\begin{figure*}[h]
    \centering
    \includegraphics[width=0.9\linewidth]{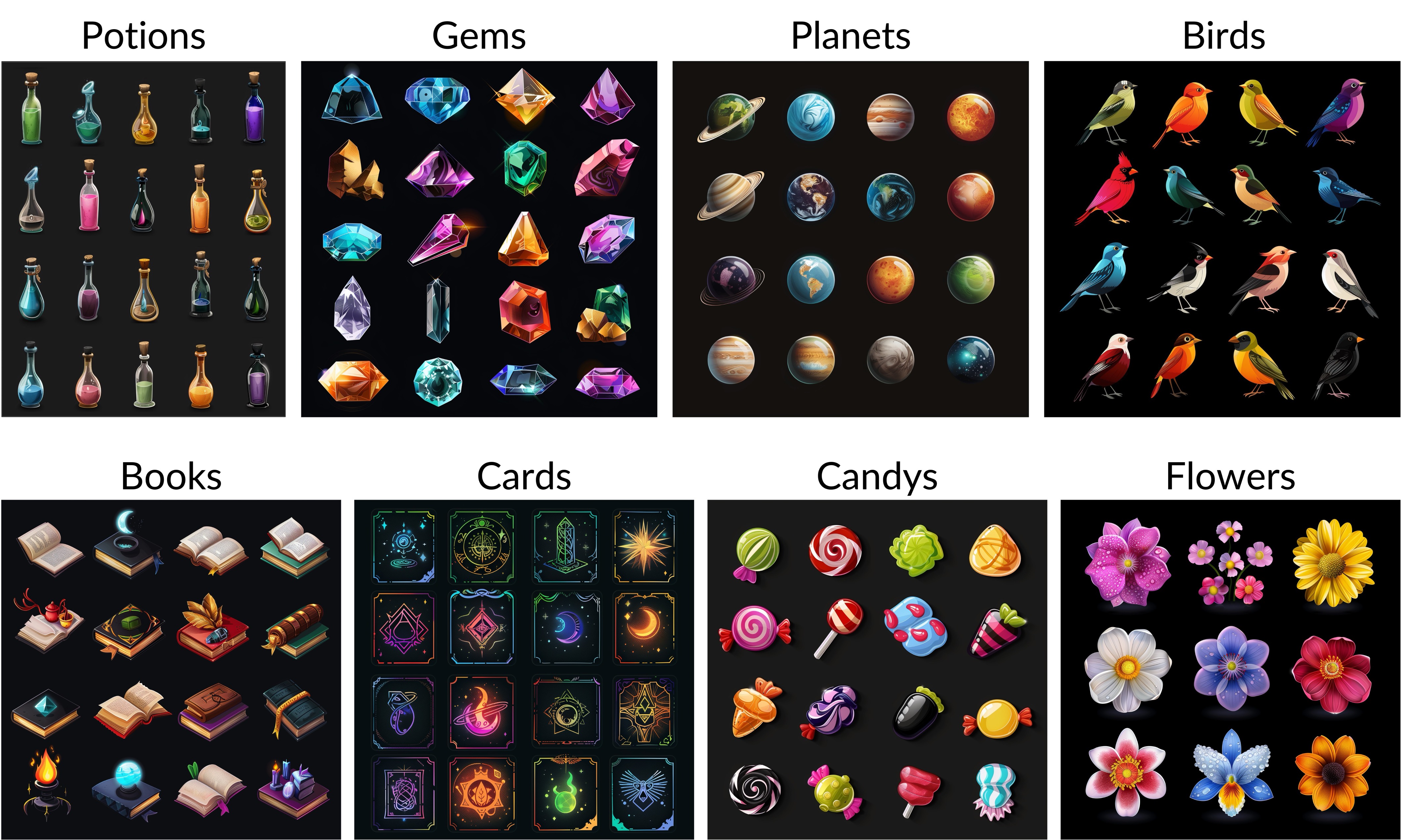}
    \caption{Examples of synthetic datasets used in the Identifiability Benchmark.}
    \label{ap:fig:identifiability}
\end{figure*}

\paragraph{Metrics.} To quantitatively evaluate identifiability, we define an accuracy metric that measures whether each object class in the dataset is correctly assigned a distinct concept in the SAE dictionary. Given an image, we pass it through the vision model and then through the trained SAE to obtain a concept-label pair $(\vec{z}, \vec{y})$, where $\vec{z} \in \mathbb{R}^k$ represents the $k$ learned concept activations, and $\vec{y} \in \mathbb{R}^c$ denotes the $c$ ground-truth class labels.

We define the accuracy for class $j$ as:
\begin{equation}
    \text{Accuracy}_j = \max_{\lambda \in \mathbb{R}, i \in [k]} \mathbb{P}_{(\vec{z}, \vec{y})}((z_i > \lambda) = y_j),
\end{equation}
where $\lambda$ is a threshold determining whether a concept is activated.

To find an appropriate $\lambda$, we use the empirical percentiles of the concept activations $\vec{Z}$, ranging from the 1st to the 100th percentile. This ensures that the threshold is adaptive to the distribution of activations, optimizing for the best classification accuracy.

\subsection{Complete Results}
The full set of results across all methods and datasets is provided in Table~\ref{table:accuracy_scores}. We also provide a comprehensive breakdown, including per-dataset accuracy scores and additional analysis.

\begin{table*}[h!]
\centering
\caption{Accuracy Scores for Various Methods Across Models and Classes}
\label{table:accuracy_scores}
\makebox[\textwidth]{
    \scalebox{0.8}{ 
\scriptsize
\begin{tabular}{ll|cccccccccccc|c}
\toprule
\textbf{Model} & \textbf{Method} & \textbf{Animals} & \textbf{Birds} & \textbf{Books} & \textbf{Candy} & \textbf{Cards} & \textbf{Cocktails} & \textbf{Flowers} & \textbf{Gems} & \textbf{Landscapes} & \textbf{Planets} & \textbf{Potions} & \textbf{Watches} & \textbf{Avg} \\
\midrule

\multirow{8}{*}{\textbf{DINO}} 
& KMeans      & 0.7679 & 0.7678 & 0.7715 & 0.7709 & 0.7724 & 0.8095 & 0.6670 & 0.8137 & 0.7147 & 0.7786 & 0.8104 & 0.7693 & 0.7678 \\
& ICA         & 0.8113 & 0.7967 & 0.8099 & 0.8182 & 0.8212 & 0.8296 & 0.7497 & 0.8569 & 0.7807 & 0.8002 & 0.8297 & 0.8068 & 0.8092 \\
& Sparse PCA  & 0.8033 & 0.7919 & 0.8013 & 0.7939 & 0.8129 & 0.8245 & 0.7126 & 0.8322 & 0.7733 & 0.8022 & 0.8307 & 0.7986 & 0.7981 \\
& SVD         & 0.8037 & 0.7916 & 0.8018 & 0.7935 & 0.8142 & 0.8245 & 0.7116 & 0.8320 & 0.7716 & 0.8023 & 0.8301 & 0.7978 & 0.7979 \\
& SemiNMF     & 0.8175 & 0.8059 & 0.8560 & 0.8660 & 0.8464 & 0.8516 & 0.7360 & 0.8564 & 0.8111 & 0.8261 & 0.8569 & 0.8264 & 0.8297 \\
& ConvexNMF   & 0.7726 & 0.7658 & 0.7739 & 0.7759 & 0.7711 & 0.8108 & 0.6264 & 0.8163 & 0.7013 & 0.7688 & 0.8205 & 0.7711 & 0.7645 \\
& PCA         & 0.8037 & 0.7916 & 0.8018 & 0.7935 & 0.8142 & 0.8245 & 0.7116 & 0.8320 & 0.7716 & 0.8023 & 0.8301 & 0.7978 & 0.7979 \\
& Vanilla     & 0.7968 & 0.7878 & 0.8087 & 0.8161 & 0.8062 & 0.8468 & 0.7202 & 0.8518 & 0.7601 & 0.8262 & 0.8385 & 0.7977 & 0.8047 \\
& TopK        & 0.7906 & 0.7942 & 0.8104 & 0.8283 & 0.8243 & 0.8407 & 0.7728 & 0.8501 & 0.7744 & 0.8184 & 0.8387 & 0.8191 & 0.8135 \\
& Jump        & 0.7863 & 0.7956 & 0.7970 & 0.8174 & 0.8121 & 0.8251 & 0.7389 & 0.8418 & 0.7504 & 0.7989 & 0.8438 & 0.8042 & 0.8010 \\
& A-SAE       & 0.9433 & 0.9413 & 0.9692 & 0.9722 & 0.9750 & 0.8901 & 0.9590 & 0.9590 & 0.9677 & 0.9277 & 0.9129 & 0.9606 & \textbf{0.9482} \\
& RA-SAE       & 0.9402 & 0.9313 & 0.9703 & 0.9614 & 0.9686 & 0.8905 & 0.9613 & 0.9503 & 0.9666 & 0.9227 & 0.9094 & 0.9642 & \underline{0.9447} \\
\midrule

\multirow{8}{*}{\textbf{ResNet}} 
& PCA         & 0.8249 & 0.8086 & 0.8556 & 0.8643 & 0.8622 & 0.8501 & 0.7908 & 0.8446 & 0.7814 & 0.8203 & 0.8326 & 0.8134 & 0.8291 \\
& KMeans      & 0.7670 & 0.7647 & 0.7720 & 0.7668 & 0.7669 & 0.8123 & 0.6418 & 0.8109 & 0.6964 & 0.7713 & 0.8109 & 0.7686 & 0.7624 \\
& ICA         & 0.8169 & 0.8231 & 0.8315 & 0.8358 & 0.8295 & 0.8852 & 0.7902 & 0.8900 & 0.8378 & 0.8206 & 0.8619 & 0.8213 & 0.8370 \\
& Sparse PCA  & 0.8256 & 0.8162 & 0.8599 & 0.8689 & 0.8661 & 0.8501 & 0.7937 & 0.8461 & 0.7841 & 0.8230 & 0.8330 & 0.8155 & 0.8318 \\
& SVD         & 0.8249 & 0.8086 & 0.8556 & 0.8643 & 0.8623 & 0.8501 & 0.7910 & 0.8445 & 0.7816 & 0.8201 & 0.8324 & 0.8134 & 0.8291 \\
& SemiNMF     & 0.8259 & 0.8274 & 0.8536 & 0.8611 & 0.8419 & 0.8577 & 0.7307 & 0.8632 & 0.8198 & 0.8195 & 0.8486 & 0.8432 & 0.8327 \\
& ConvexNMF   & 0.7647 & 0.7648 & 0.7699 & 0.7669 & 0.7659 & 0.8086 & 0.6102 & 0.8111 & 0.6939 & 0.7686 & 0.8098 & 0.7644 & 0.7582 \\
& Vanilla     & 0.8062 & 0.8143 & 0.8270 & 0.8228 & 0.8397 & 0.8412 & 0.7380 & 0.8393 & 0.7894 & 0.8147 & 0.8475 & 0.8198 & 0.8167 \\
& TopK        & 0.8078 & 0.8083 & 0.8384 & 0.8269 & 0.8176 & 0.8496 & 0.7437 & 0.8397 & 0.7753 & 0.8193 & 0.8482 & 0.8052 & 0.8150 \\
& Jump        & 0.7949 & 0.8027 & 0.8082 & 0.8274 & 0.8144 & 0.8345 & 0.6648 & 0.8373 & 0.7485 & 0.8161 & 0.8287 & 0.8084 & 0.7988 \\
& A-SAE       & 0.9633 & 0.9703 & 0.9638 & 0.9673 & 0.9713 & 0.9738 & 0.9894 & 0.9722 & 0.9658 & 0.9342 & 0.9539 & 0.9315 & \textbf{0.9631} \\
& RA-SAE       & 0.9613 & 0.9577 & 0.9694 & 0.9834 & 0.9709 & 0.9625 & 0.9497 & 0.9640 & 0.9629 & 0.9371 & 0.9554 & 0.9479 & \underline{0.9602} \\
\midrule

\multirow{8}{*}{\textbf{SigLIP}} 
& PCA         & 0.8253 & 0.7957 & 0.8264 & 0.8030 & 0.8157 & 0.8286 & 0.7367 & 0.8270 & 0.7678 & 0.7931 & 0.8291 & 0.8261 & 0.8062 \\
& KMeans      & 0.7733 & 0.7672 & 0.7733 & 0.7691 & 0.7742 & 0.8094 & 0.6690 & 0.8121 & 0.7171 & 0.7733 & 0.8101 & 0.7724 & 0.7684 \\
& ICA         & 0.8372 & 0.8171 & 0.8151 & 0.8254 & 0.8382 & 0.8341 & 0.7918 & 0.8333 & 0.8420 & 0.8056 & 0.8362 & 0.8158 & 0.8243 \\
& Sparse PCA  & 0.8251 & 0.7962 & 0.8374 & 0.8021 & 0.8152 & 0.8269 & 0.7369 & 0.8286 & 0.7676 & 0.7941 & 0.8286 & 0.8240 & 0.8069 \\
& SVD         & 0.8253 & 0.7957 & 0.8264 & 0.8030 & 0.8157 & 0.8286 & 0.7367 & 0.8269 & 0.7678 & 0.7931 & 0.8291 & 0.8261 & 0.8062 \\
& SemiNMF     & 0.8574 & 0.8386 & 0.8413 & 0.8559 & 0.8521 & 0.8393 & 0.7780 & 0.8524 & 0.7979 & 0.8320 & 0.8389 & 0.8463 & 0.8358 \\
& ConvexNMF   & 0.7706 & 0.7706 & 0.7769 & 0.7704 & 0.7687 & 0.8090 & 0.6314 & 0.8121 & 0.6972 & 0.7718 & 0.8134 & 0.7745 & 0.7639 \\
& Vanilla     & 0.8240 & 0.8160 & 0.8097 & 0.8202 & 0.8429 & 0.8436 & 0.7167 & 0.8493 & 0.7775 & 0.7976 & 0.8433 & 0.8108 & 0.8126 \\
& TopK        & 0.8254 & 0.8190 & 0.8215 & 0.8439 & 0.8664 & 0.8460 & 0.7576 & 0.8547 & 0.8022 & 0.8253 & 0.8483 & 0.8364 & 0.8289 \\
& Jump        & 0.8199 & 0.8351 & 0.8048 & 0.8206 & 0.8222 & 0.8367 & 0.7307 & 0.8493 & 0.7813 & 0.7946 & 0.8352 & 0.8269 & 0.8131 \\
& A-SAE       & 0.9727 & 0.9613 & 0.9517 & 0.9686 & 0.9753 & 0.9445 & 0.9622 & 0.9669 & 0.9553 & 0.9479 & 0.9457 & 0.9704 & \textbf{0.9602} \\
& RA-SAE       & 0.9655 & 0.9654 & 0.9411 & 0.9681 & 0.9749 & 0.9366 & 0.9632 & 0.9594 & 0.9543 & 0.9325 & 0.9546 & 0.9861 & \underline{0.9585} \\

\midrule

\multirow{8}{*}{\textbf{ViT}} 
& PCA         & 0.7994 & 0.8107 & 0.8226 & 0.8258 & 0.7963 & 0.8352 & 0.7780 & 0.8274 & 0.7556 & 0.8029 & 0.8199 & 0.8164 & 0.8075 \\
& KMeans      & 0.7706 & 0.7719 & 0.7744 & 0.7776 & 0.7752 & 0.8095 & 0.6721 & 0.8134 & 0.7120 & 0.7841 & 0.8099 & 0.7722 & 0.7702 \\
& ICA         & 0.8134 & 0.8251 & 0.8243 & 0.8437 & 0.8159 & 0.8623 & 0.7983 & 0.8539 & 0.7998 & 0.8229 & 0.8415 & 0.8188 & 0.8267 \\
& Sparse PCA  & 0.8003 & 0.8109 & 0.8225 & 0.8287 & 0.7963 & 0.8342 & 0.7780 & 0.8282 & 0.7563 & 0.8022 & 0.8201 & 0.8208 & 0.8082 \\
& SVD         & 0.7994 & 0.8107 & 0.8226 & 0.8258 & 0.7963 & 0.8351 & 0.7780 & 0.8274 & 0.7556 & 0.8029 & 0.8201 & 0.8164 & 0.8075 \\
& SemiNMF     & 0.8232 & 0.8414 & 0.8436 & 0.8410 & 0.8598 & 0.8661 & 0.8133 & 0.8584 & 0.8064 & 0.8418 & 0.8492 & 0.8629 & 0.8423 \\
& ConvexNMF   & 0.7709 & 0.7685 & 0.7684 & 0.7714 & 0.7689 & 0.8117 & 0.6254 & 0.8132 & 0.7017 & 0.7736 & 0.8134 & 0.7738 & 0.7634 \\
& Vanilla     & 0.8169 & 0.8548 & 0.8459 & 0.8234 & 0.8438 & 0.8481 & 0.7312 & 0.8598 & 0.7512 & 0.8249 & 0.8285 & 0.8387 & 0.8223 \\
& TopK        & 0.8494 & 0.8430 & 0.8577 & 0.8264 & 0.8405 & 0.8503 & 0.7849 & 0.8534 & 0.7910 & 0.8215 & 0.8310 & 0.8451 & 0.8328 \\
& Jump        & 0.8042 & 0.7995 & 0.8316 & 0.8156 & 0.8064 & 0.8377 & 0.6872 & 0.8515 & 0.7681 & 0.8143 & 0.8300 & 0.8172 & 0.8053 \\
& A-SAE       & 0.9647 & 0.9847 & 0.9519 & 0.9838 & 0.9733 & 0.9578 & 0.9938 & 0.9488 & 0.9384 & 0.9369 & 0.9325 & 0.9719 & \textbf{0.9615} \\
& RA-SAE       & 0.9699 & 0.9847 & 0.9683 & 0.9696 & 0.9694 & 0.9620 & 0.9576 & 0.9455 & 0.9281 & 0.9484 & 0.9253 & 0.9745 & \underline{0.9586} \\
\bottomrule
\end{tabular}
}}
\end{table*}

\end{document}